\newcommand{\ourMethod}{\textsc{svdgp}} %stochastic Variational Inference with decoupled GP
\newcommand{\ourModel}{DGP} 
\def\BB{\mathcal{B}}
\def\DD{\mathcal{D}}\def\FF{\mathcal{F}}
\def\GG{\mathcal{G}}\def\HH{\mathcal{H}}
\def\LL{\mathcal{L}}
\def\MM{\mathcal{M}}\def\NN{\mathcal{N}}
\def\PP{\mathcal{P}}
\def\XX{\mathcal{X}}
\def\YY{\mathcal{Y}}
\def\Cbb{\mathbb{C}}
\def\Ebb{\mathbb{E}}
\def\Rbb{\mathbb{R}}
\def\R{\Rbb}\def\C{\Cbb}
\def\E{\Ebb}
\def\GP{\GG\PP}
\def\der{\mathrm{d}}
\newcommand{\KL}[2] { \mathrm{KL}[ #1 || #2 ]   }
\newcommand{\norm}[1]{\| #1 \|}
\newcommand{\abs}[1]{|#1|}
\newcommand{\mbf}[1]{\mathbf{#1}}
\def\diag{\mathrm{diag}}
\newcommand{\tr}[1]{ \mathrm{tr}\left( #1\right)}
\newtheorem{theorem}{Theorem}
\newtheorem{proposition}{Proposition}
\def\ifCompileAppendix{\iftrue}
\title{Variational Inference for Gaussian Process Models with Linear Complexity}
\author{
  Ching-An Cheng \\
 \hspace{-2mm} Institute for Robotics and Intelligent Machines\\
  Georgia Institute of Technology\\
  Atlanta, GA 30332 \\
  \texttt{cacheng@gatech.edu} \\
 \And
 Byron Boots\\
  Institute for Robotics and Intelligent Machines \hspace{-2mm}\\
  Georgia Institute of Technology\\
  Atlanta, GA 30332 \\
  \texttt{bboots@cc.gatech.edu} \\
}
\begin{document}
\maketitle
\setlength{\textfloatsep}{1\baselineskip}
\setlength{\intextsep}{1\baselineskip}

\begin{abstract}
Large-scale Gaussian process inference has long faced practical challenges due to time and space complexity that is superlinear in dataset size. While sparse variational Gaussian process models are capable of learning from large-scale data, standard strategies for sparsifying the model can prevent the approximation of complex functions. In this work, we propose a novel variational Gaussian process model that decouples the representation of mean and covariance functions in reproducing kernel Hilbert space. We show that this new parametrization generalizes previous models. Furthermore, it yields a variational inference problem that can be solved by stochastic gradient ascent with time and space complexity that is only linear in the number of mean function parameters, regardless of the choice of kernels, likelihoods, and inducing points. This strategy makes the adoption of large-scale expressive Gaussian process models possible. We run several experiments on regression tasks and show that this decoupled approach greatly outperforms previous sparse variational Gaussian process inference procedures.  
\end{abstract}

\section{Introduction} \label{sec:introduction}

Gaussian process (GP) inference is a popular nonparametric framework for reasoning about functions under uncertainty.  
However, the expressiveness of GPs comes at a price: 
solving (approximate) inference for a GP with $N$ data instances has time and space complexities  in $\Theta(N^3)$ and $\Theta(N^2)$, respectively.
Therefore, GPs have traditionally been viewed as a tool for problems with small- or medium-sized datasets

Recently, the concept of inducing points has been used to scale GPs to larger datasets. The idea is to summarize a full GP model with statistics on a sparse set of $M \ll N$ fictitious observations~\cite{quinonero2005unifying,titsias2009variational}. By representing a GP with these inducing points, the time and the space complexities are reduced to $O(NM^2+M^3)$ and $O(NM+M^2 )$, respectively. To further process datasets that are too large to fit into memory, stochastic approximations  have been proposed for regression~\cite{hensman2013gaussian} and classification~\cite{hensman2015scalable}. These methods have similar complexity bounds, but with $N$ replaced by the size of a mini-batch $N_m$.

Despite the success of sparse models, the scalability issues of GP inference are far from resolved. The major obstruction is that the cubic complexity in $M$ in the aforementioned upper-bound is also a lower-bound, which results from the inversion of an $M$-by-$M$ covariance matrix defined on the inducing points. As a consequence, these models can only afford to use a small set of $M$ basis functions, limiting the expressiveness of GPs for prediction.

In this work, we show that superlinear complexity is not completely necessary. Inspired by the reproducing kernel Hilbert space (RKHS) representation of GPs~\cite{cheng2016incremental}, we propose a generalized variational GP model, called \ourModel{}s (Decoupled Gaussian Processes), which \emph{decouples} the bases for the mean and the covariance functions. 
Specifically, let $M_\alpha$ and $M_\beta$ be the numbers of basis functions used to model the mean and the covariance functions, respectively. Assume $M_\alpha \geq M_\beta$. 
We show, when \ourModel{}s are used as a variational posterior~\cite{titsias2009variational}, the associated variational inference problem can be solved by stochastic gradient ascent with space complexity $O( N_m M_\alpha + M_\beta^2)$ and time complexity $O(D N_m M_{\alpha} + N_m M_\beta^2 + M_{\beta}^3)$, where $D$ is the input dimension. We name this algorithm~\ourMethod{}.  As a result, we can  choose $M_\alpha \gg M_\beta$, which allows us to keep the time and space complexity similar to previous methods (by choosing $M_\beta = M$) while greatly increasing accuracy.  To the best of our knowledge, this is the first variational GP algorithm that admits linear complexity in $M_\alpha $, without any assumption on the choice of kernel and likelihood.

While we design \ourMethod{} for general likelihoods, in this paper we study its effectiveness in Gaussian process regression (GPR) tasks.
We consider this is without loss of generality, as most of the sparse variational GPR algorithms in the literature can be modified to handle general likelihoods by introducing additional approximations
(e.g. in~\citet{hensman2015scalable} and ~\citet{sheth2015sparse}).  Our experimental results show that \ourMethod{} significantly outperforms the existing techniques, achieving  higher variational lower bounds and lower prediction errors when evaluated on held-out test sets.

\subsection{Related Work}
\label{sec:related work}

\begin{table*}[t]
{\footnotesize
\centering
\begin{tabular}{llllllll}
\toprule
&  $\mbf{a}$, $\mbf{B}$ & $\alpha$,$\beta$ & $\theta$ & $\alpha = \beta$ &  $N \neq M$ & Time  & Space\\
\midrule
\ourMethod{} & \textsc{sga} & \textsc{sga} & \textsc{sga} & \textsc{false} 
& \textsc{true}  & $O(D N M_{\alpha} + N M_\beta^2 + M_{\beta}^3)$ 
&$O( N M_\alpha + M_\beta^2)$  \\
\textsc{svi} & \textsc{snga} & \textsc{sga} & \textsc{sga} & \textsc{true}& \textsc{true} & $ O(D N M +  N  M^2 + M^3)$ & $O(NM + M^2)$
\\
i\textsc{vsgpr} & \textsc{sma} & \textsc{sma} & \textsc{sga} & \textsc{true}& \textsc{true} & $ O(D N M +  N M^2 + M^3)$ & $O(NM + M^2)$ \\
\textsc{vsgpr} & \textsc{cg} & \textsc{cg} & \textsc{cg} & \textsc{true}& \textsc{true} & $ O(DNM +  N M^2 + M^3)$ & $O(NM + M^2)$ \\
\textsc{gpr} & \textsc{cg} & \textsc{cg} & \textsc{cg} & \textsc{true}& \textsc{false} & $ O(DN^2 + N^3)$ & $O(N^2)$ \\
\bottomrule
\end{tabular}

\caption[Comparison]{Comparison between \ourMethod{} and  variational GPR algorithms:  \textsc{svi}~\cite{hensman2013gaussian},  i\textsc{vsgpr}~\cite{cheng2016incremental}, \textsc{vsgpr}~\cite{titsias2009variational}, and \textsc{gpr}~\cite{rasmussen2006gaussian}, where  $N$ is the number of observations/the size of a mini-batch, $M$, $M_\alpha$, $M_\beta$ are the number of basis functions, and $D$ is the input dimension. Here it is assumed $M_\alpha \geq M_\beta$ \protect\footnotemark 	
	.
} \label{tb:algorithms}
} 
\end{table*} 
\footnotetext{The first three columns show the algorithms to update the parameters:  \textsc{sga}/\textsc{snga}/\textsc{sma} denotes stochastic gradient/natural gradient/mirror ascent, and \textsc{cg} denotes batch nonlinear conjugate gradient ascent. The 4th and the 5th columns indicate whether the bases for mean and covariance are strictly shared, and whether a variational posterior can be used. The last two columns list the time and space complexity.
}

Our framework is based on the variational inference problem proposed by~\citet{titsias2009variational}, which treats the inducing points as variational parameters to allow  direct approximation of the true posterior. This is in contrast to \citet{seeger2003fast, snelson2005sparse, quinonero2005unifying}, and~\citet{quia2010sparse}, which all use inducing points as hyper-parameters of a degenerate prior. While both approaches have the same time and space complexity, the latter additionally introduces a large set of unregularized hyper-parameters and, therefore, is more likely to suffer from over-fitting~\cite{bauer2016understanding}.

In Table~\ref{tb:algorithms}, we compare \ourMethod{} with recent GPR algorithms in terms of the assumptions made and the time and space complexity. Each algorithm can be viewed as a special way to solve the maximization of the variational lower bound~\eqref{eq:Var GP obj in RKHS}, presented in Section~\ref{sec:Variational Inference of Gaussian Measures}. Our  algorithm \ourMethod{} generalizes the previous approaches to allow the basis functions for the mean and the covariance to be decoupled, so an approximate solution can be found by  stochastic gradient ascent in linear complexity.

\begin{figure}
\vspace{-8mm}
	\centering
	\subfloat[$M = 10$]{\includegraphics[width=0.31\linewidth]{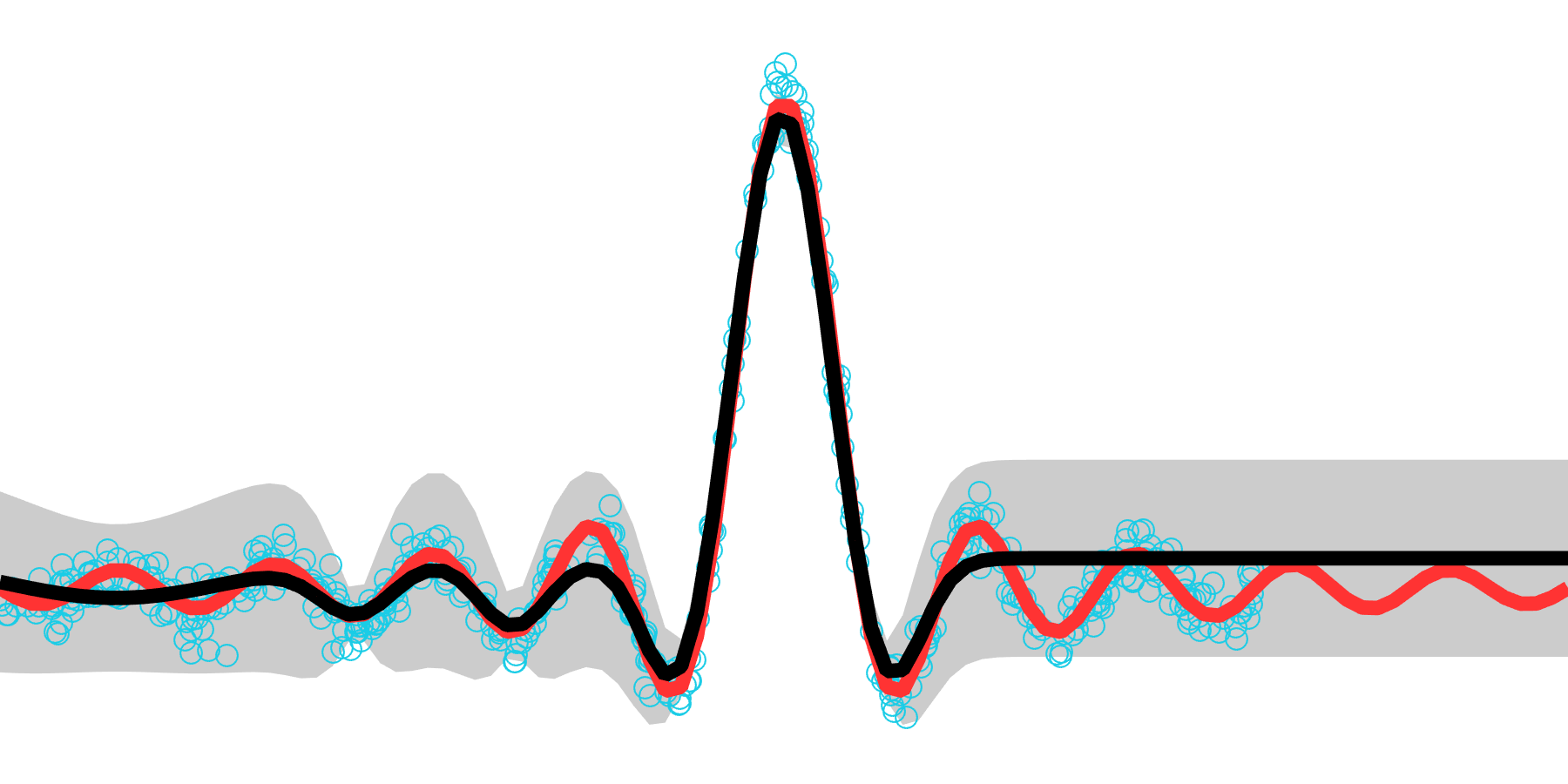}}
	\hspace{2mm}
	\subfloat[$M_\alpha = 100, M_\beta = 10$]{\includegraphics[width=0.31\linewidth]{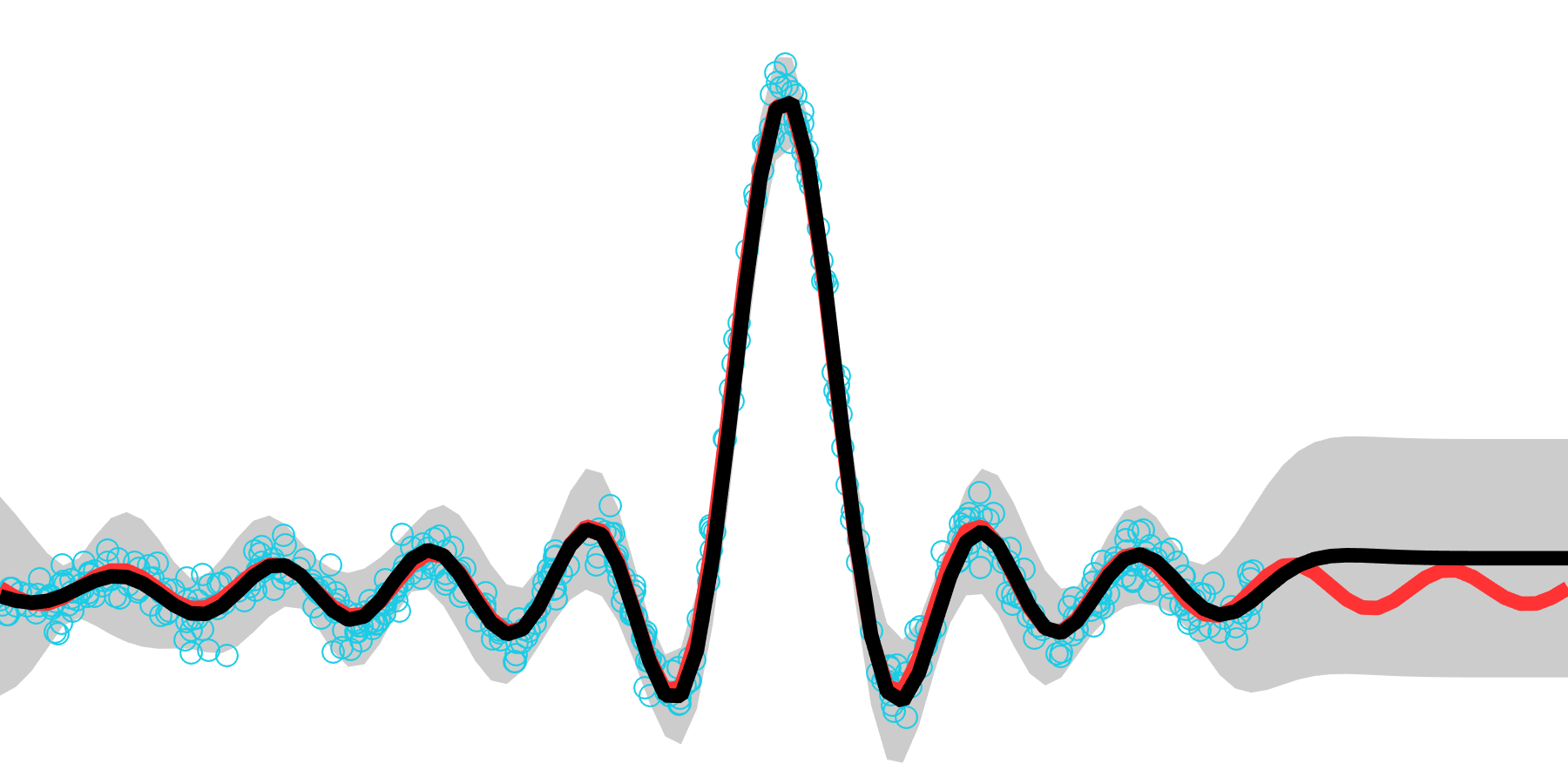}}
	\hspace{2mm}
	\subfloat[$M = 100$]{\includegraphics[width=0.31\linewidth]{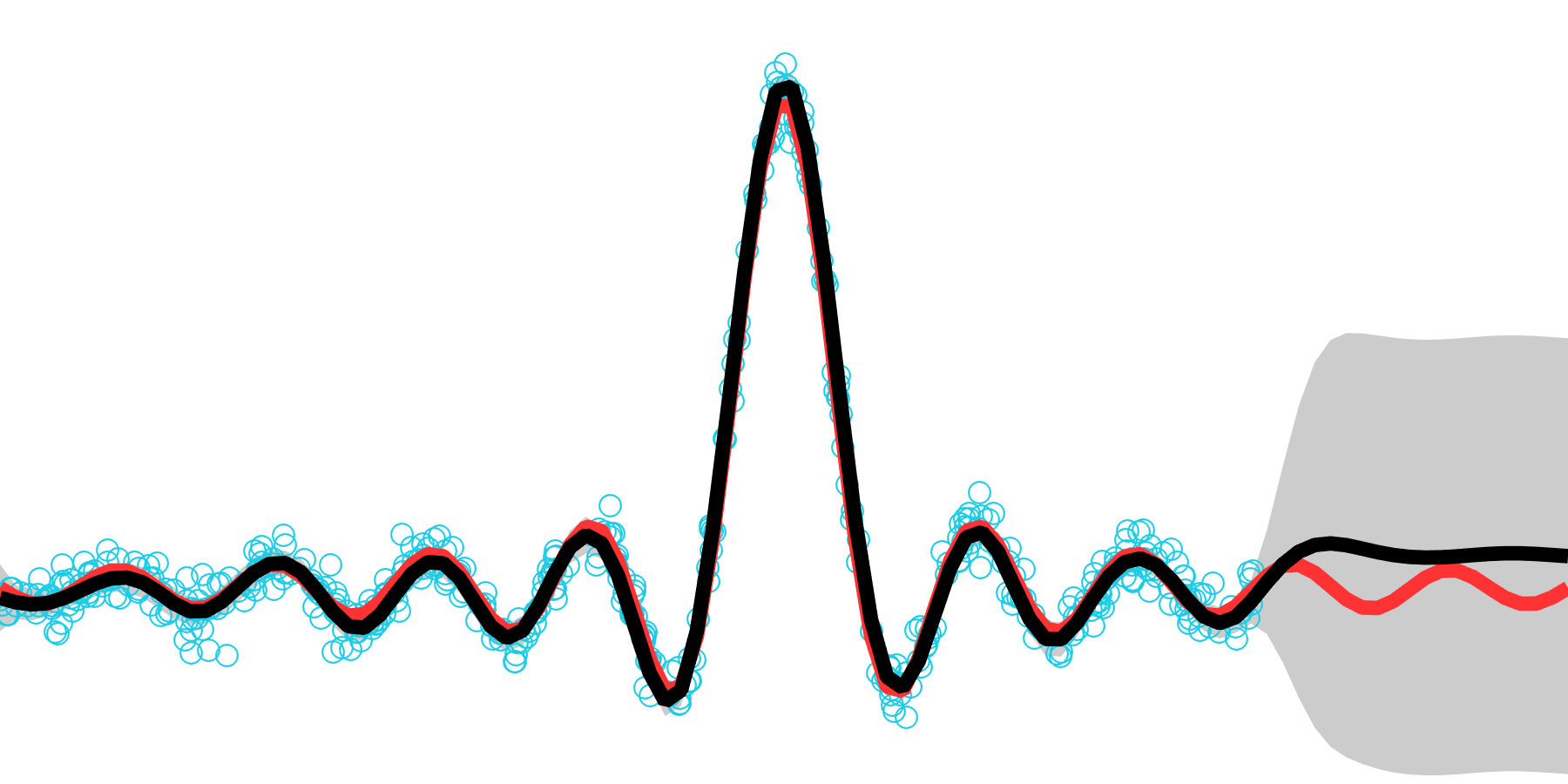}}
	\caption{Comparison between models with shared and decoupled basis. (a)(c) denote the models with shared basis of size $M$. (b) denotes the model of decoupled basis with size $(M_\alpha,M_\beta) $. In each figure, the red line denotes the ground truth; the blue circles denote the observations; the black line and the gray area denote the mean and variance in prediction, respectively. }
	\label{fig:toy example}
\end{figure}

To illustrate the idea, we consider a toy GPR example in Figure~\ref{fig:toy example}. The dataset contains 500 noisy observations of a $sinc$ function. Given the same training data, we conduct experiments with three different GP models. Figure~\ref{fig:toy example} (a)(c) show the results of the traditional coupled basis, which can be solved by any of the variational algorithms listed in Table~\ref{tb:algorithms}, and Figure~\ref{fig:toy example} (b) shows the result using the decoupled approach \ourMethod{}. The sizes of basis and observations are selected to emulate a large dataset scenario.  We can observe \ourMethod{} achieves a nice trade-off between prediction performance and complexity: it achieves almost the same accuracy in prediction as the full-scale model in Figure~\ref{fig:toy example}(c) and preserves the overall shape of the predictive variance. 

In addition to the sparse algorithms above, some recent attempts aim to revive the non-parametric property of GPs by structured covariance functions. For example, \citet{wilson2015kernel} proposes to space the inducing points on a multidimensional lattice, so the time and space complexities of using a product kernel becomes $O(N+DM^{1+1/D})$ and $O(N+DM^{1+2/D})$, respectively. However, because $M = c^D$, where $c$ is the number of grid points per dimension,  the overall complexity is exponential in $D$ and infeasible for high-dimensional data.
Another interesting approach by~\citet{hensman2016variational} combines variational inference~\cite{titsias2009variational} and a sparse spectral approximation~\cite{quia2010sparse}. By equally spacing inducing points on the spectrum, they show the covariance matrix on the inducing points have diagonal plus low-rank structure. With MCMC, the algorithm can achieve complexity $O(DNM)$. 
However, the proposed structure in~\cite{hensman2016variational} does not help to reduce the complexity when an approximate Gaussian posterior is favored or when the kernel hyper-parameters need to be updated.

Other kernel methods with linear complexity have been proposed using functional gradient descent ~\cite{kivinen2004online,dai2014scalable}. However, because these methods  use a model strictly the same size as the entire dataset, they fail to estimate the predictive covariance, which requires $\Omega(N^2)$ space complexity.  
Moreover, they cannot learn hyper-parameters online. 
The latter drawback also applies to greedy algorithms based on rank-one updates, e.g. the algorithm of \citet{csato2001sparse}. 

In contrast to these previous methods, our algorithm applies to \emph{all} choices of inducing points, likelihoods, and kernels, and we allow both  variational parameters and hyper-parameters to adapt online as more data are encountered.

\section{Preliminaries}
In this section, we briefly review the inference for GPs and the variational framework proposed by~\citet{titsias2009variational}. For now, we will focus on GPR for simplicity of exposition. We will discuss the case of general likelihoods in the next section when we introduce our framework,~\ourModel{}s.

\subsection{Inference for GPs} \label{sec:GP inference}

Let $f:\XX \to \R$ be a latent function defined on a compact domain $\XX \subset \R^D$. Here we assume \emph{a priori} that $f$ is distributed according to a Gaussian process $\GP(m,k)$. That is, $\forall x,x'\in \XX$, $\E[f(x)] = m(x)$ and $\Cbb[ f(x), f(x')] = k(x,x')$. In short, we write $f \sim \GP(m,k)$.

A GP probabilistic model is composed of a likelihood  $p(y | f(x))$ and a GP prior $\GP(m,k)$; in GPR, the likelihood is assumed to be Gaussian i.e. $ p(y | f(x)) = \NN( y | f(x), \sigma^2)$ with variance $\sigma^2$. Usually, the likelihood  and the GP prior are parameterized by some hyper-parameters, which we summarize as $\theta$. This includes, for example, the variance $\sigma^2$  and the parameters implicitly involved in defining $k(x,x')$. 
For notational convenience, and without loss of generality, we assume $m(x) = 0$ in the prior distribution and omit explicitly writing the dependence of distributions on $\theta$. 

Assume we are given a dataset $\DD = \{(x_n, y_n)\}_{n=1}^N$, in which $x_n\in \XX$ and $y_n \sim p(y | f(x_n))$. Let\footnote{In notation, we use boldface to distinguish finite-dimensional vectors (lower-case) and matrices (upper-case) that are used in computation from scalar and abstract mathematical objects.} $X = \{x_n\}_{n=1}^N$ and $ \mbf{y} = (y_n)_{n=1}^N $.  Inference for GPs involves solving for the posterior $p_{\theta^*}(f(x)| \mbf{y})$ for any  new  input $x\in \XX$, where $ \theta^* = \arg \max_{\theta} \log p_{\theta} (\mbf{y}) $. 
For example in GPR, because the likelihood is Gaussian, the predictive posterior  is also Gaussian with mean and covariance
 \begin{align}
 m_{|\mbf{y}}(x) = \mbf{k}_{x, X} ( \mbf{K}_X + \sigma^2 \mbf{I})^{-1} \mbf{y}, \qquad 
 k_{|\mbf{y}}(x,x') = k_{x, x'}  -  \mbf{k}_{x, X} ( \mbf{K}_X + \sigma^2 \mbf{I})^{-1}  \mbf{k}_{X,x'}, \label{eq:GPR post}
 \end{align}
and the hyper-parameter $\theta^*$ can be found by  nonlinear conjugate gradient ascent~\cite{rasmussen2006gaussian}
 \begin{align}
 \max_{\theta} \log p_\theta (\mbf{y}) = \max_{\theta} \log\NN( \mbf{y} | 0, \mbf{K}_X + \sigma^2 \mbf{I} ),
 \label{eq:full GP obj}
 \end{align}
where $k_{\cdot,\cdot}$, $\mbf{k_{\cdot,\cdot}}$ and $\mbf{K}_{\cdot,\cdot}$ denote the covariances between the sets in the subscript.\footnote{If the two sets are the same, only one is listed.}
One can show that these two functions, $m_{|\mbf{y}}(x)$ and $k_{|\mbf{y}}(x,x')$, define a valid GP. Therefore, given observations $\mbf{y}$, we say  $f \sim \GP(m_{|\mbf{y}},k_{|\mbf{y}})$. 

Although theoretically GPs are non-parametric and can model any function as $N \rightarrow \infty$, in practice this is difficult. 
As the inference has time complexity $\Omega(N^3)$ and space complexity $\Omega(N^2)$, applying vanilla GPs to large datasets is infeasible.

\subsection{Variational Inference with Sparse GPs}
To scale GPs to large datasets, \citet{titsias2009variational} introduced a scheme to compactly approximate the true posterior with a sparse GP,
$\GP( \hat{m}_{|\mbf{y}}, \hat{k}_{|\mbf{y}})$,
defined by the statistics on $M\ll N$ function values:
$
\{ L_m f(\tilde{x}_m) \}_{m=1}^M, 
$
where $L_m$ is a bounded linear operator\footnote{Here we use the notation $L_m f$ loosely for the compactness of writing. Rigorously, $L_m$ is a bounded linear operator acting on $m$ and $k$, not necessarily on all sample paths $f$.} and $\tilde{x}_m \in \XX$. 
$L_m f (\cdot)$ is called an \emph{inducing function} and $\tilde{x}_m$ an \emph{inducing point}. Common choices of $L_m$ include the identity map (as used originally by~\citet{titsias2009variational}) and integrals to achieve better approximation or to consider multi-domain information~\cite{walder2008sparse, figueiras2009inter,cheng2016learn}. Intuitively, we can think of $\{ L_m f(\tilde{x}_m) \}_{m=1}^M$ as a set of potentially indirect observations that capture salient information about the unknown function $f$.

\citet{titsias2009variational} solves for $\GP( \hat{m}_{|\mbf{y}}, \hat{k}_{|\mbf{y}})$ by variational inference. 
Let $\tilde{X} = \{\tilde{x}_m\}_{m=1}^M$ and let $\mbf{f}_X \in \R^N$ and $\mbf{f}_{\tilde{X}} \in \R^M$ be the (inducing) function values defined on $X$ and  $\tilde{X}$, respectively.
Let $p(\mbf{f}_{\tilde{X}})$ be the prior given by $\GP(m,k)$ and define $q( \mbf{f}_{\tilde{X}}) = \NN (\mbf{f}_{\tilde{X}} | \tilde{\mbf{m}},  \tilde{\mbf{S}})$ to be its variational posterior, where $\tilde{\mbf{m}} \in \R^M $ and $\tilde{\mbf{S}} \in \R^{M \times M}$ are the mean and the covariance of the approximate posterior of $\mbf{f}_{\tilde{X}}$. 
 \citet{titsias2009variational} proposes to use $q(\mbf{f}_X, \mbf{f}_{\tilde{X}}) =  p(\mbf{f}_X| \mbf{f}_{\tilde{X}} ) q( \mbf{f}_{\tilde{X}}) $ as the variational posterior to approximate $p(\mbf{f}_X, \mbf{f}_{\tilde{X}} | \mbf{y}) $ and to solve for $q( \mbf{f}_{\tilde{X}})$ together with the hyper-parameter $\theta$ through
\begin{align}
\max_{\theta, \tilde{X}, \tilde{\mbf{m}}, \tilde{\mbf{S}}  }   \LL_{\theta}( \tilde{X},\tilde{\mbf{m}}, \tilde{\mbf{S}}    ) =  \max_{\theta, \tilde{X}, \tilde{\mbf{m}}, \tilde{\mbf{S}}  }  \int q( \mbf{f}_X, \mbf{f}_{\tilde{X}}) \log \frac{ p( \mbf{y} | \mbf{f}_X) p(\mbf{f}_X |\mbf{f}_{\tilde{X}}) p(\mbf{f}_{\tilde{X}})  }{ q(\mbf{f}_X, \mbf{f}_{\tilde{X}})}  \der \mbf{f}_X \der \mbf{f}_{\tilde{X}}, \label{eq:Var GP obj}
\end{align}
where $ \LL_{\theta}$ is a variational lower bound of $ \log p_\theta(\mbf{y}) $, $p(\mbf{f}_X| \mbf{f}_{\tilde{X}} )  = \NN ( \mbf{f}_X|  \mbf{K}_{X, \tilde{X}} \mbf{K}_{\tilde{X}}^{-1} \mbf{f}_{\tilde{X}},  \mbf{K}_{X} -  \hat{\mbf{K}}_X  )$ is the conditional probability given in $\GP(m,k)$, and $\hat{\mbf{K}}_X = \mbf{K}_{X,\tilde{X}} \mbf{K}_{\tilde{X}}^{-1} \mbf{K}_{\tilde{X},X}$.

At first glance, the specific choice of variational posterior $q(\mbf{f}_X, \mbf{f}_{\tilde{X}})$ seems heuristic. However, although parameterized finitely, it resembles a full-fledged GP $\GP( \hat{m}_{|\mbf{y}}, \hat{k}_{|\mbf{y}})$:
\begin{align}
\hat{m}_{|\mbf{y}}(x) =  \mbf{k}_{x, \tilde{X}} \mbf{K}_{\tilde{X}}^{-1}\tilde{\mbf{m}},   \qquad
\hat{k}_{|\mbf{y}}(x,x') =  k_{x, x'}   +  \mbf{k}_{x, \tilde{X}} \mbf{K}_{\tilde{X}}^{-1} \left( \tilde{\mbf{S}} -  \mbf{K}_{\tilde{X}}  \right) \mbf{K}_{\tilde{X}}^{-1} \mbf{k}_{\tilde{X},x'}.  \label{eq:vsgpr}
\end{align}
This result is further studied in \citet{matthews2016sparse} and \citet{cheng2016incremental}, where it is shown that \eqref{eq:Var GP obj} is indeed minimizing a proper KL-divergence between Gaussian processes/measures.

By comparing~\eqref{eq:full GP obj} and~\eqref{eq:Var GP obj}, one can show that the time and the space complexities now reduce to $O(DNM + M^2N + M^3)$ and $O(M^2+MN)$, respectively, due to the low-rank structure of $\hat{\mbf{K}}_{\tilde{X}}$~\cite{titsias2009variational}. 
To further reduce complexity, stochastic optimization, such as stochastic natural ascent ~\cite{hensman2013gaussian} or stochastic mirror descent~\cite{cheng2016incremental} can be applied. In this case, $N$ in the above asymptotic bounds would be replaced by the size of a mini-batch $N_m$.  The above results can be modified to consider general likelihoods as in \citep{sheth2015sparse,hensman2015scalable}.

\section{Variational Inference with Decoupled Gaussian Processes}
Despite the success of sparse GPs, the scalability issues of GPs persist. Although parameterizing a GP with inducing points/functions enables learning from large datasets, it also restricts the expressiveness of the model. As the time and the space complexities still scale in $\Omega(M^3)$ and $\Omega(M^2)$, we cannot learn or use a complex model with large $M$.

In this work, we show that these two complexity bounds, which have long accompanied GP models, are not strictly necessary, but are due to the tangled representation canonically used in the GP literature. 
To elucidate this, we adopt the dual representation of~\citet{cheng2016incremental}, which treats GPs as linear operators in RKHS. But, unlike~\citet{cheng2016incremental},  we show how to decouple the basis representation of  mean and  covariance functions of a GP and  derive  a new variational problem, which can be viewed as a generalization of~\eqref{eq:Var GP obj}. We show that this problem---with arbitrary likelihoods and kernels---can be solved by stochastic gradient ascent with linear complexity in $M_\alpha$, the number of parameters used to specify the mean function for prediction. 

In the following, we first review the results in~\cite{cheng2016incremental}. We next introduce the decoupled representation, \ourModel{}s, and its variational inference problem. Finally, we present \ourMethod{} and discuss the case with general likelihoods.

\subsection{Gaussian Processes as Gaussian Measures}
\label{sec:Gaussian Measure}

Let an RKHS $\HH$ be a Hilbert space of functions with the reproducing property: $\forall x\in \XX$, $\exists \phi_x \in \HH$ such that $\forall f \in \HH$, $f(x) = \phi_x^T f$.\footnote{To simplify the notation, we write $\phi_x^T f$ for $ \langle f, \phi_x \rangle_\HH$, and $f^T L g$ for $ \langle f, L g \rangle_\HH$, where $f, g \in \HH$ and $L: \HH \to \HH$,  even if $\HH$ is infinite-dimensional.}
A Gaussian process $\GP(m,k)$ is equivalent to a Gaussian measure $\nu$ on Banach space $\BB$ which possesses an RKHS
$\HH$~\cite{cheng2016incremental}:\footnote{Such $\HH$ w.l.o.g. can be identified as the natural RKHS of the covariance function of a zero-mean prior GP.} there is a mean functional $ \mu \in \HH$ and a bounded positive semi-definite linear operator $\Sigma: \HH \to \HH$, such that for any $x,x' \in \XX$, $\exists \phi_x, \phi_{x'} \in \HH$,  we can write $m(x) =  \phi_x^T \mu$ and $k(x,x') =   \phi_x^T \Sigma \phi_{x'}$.
The triple $(\BB, \nu, \HH)$ is known as an abstract Wiener space~\cite{gross1967abstract,eldredge2016analysis}, in which $\HH$ is also called the Cameron-Martin space. 
Here the restriction that $\mu$, $\Sigma$ are RKHS objects is necessary, so the variational inference problem in the next section can be well-defined. 

We call this the \emph{dual} representation of a GP in RKHS $\HH$ (the mean function $m$ and the covariance function $k$ are realized as linear operators $\mu$ and $\Sigma$ defined in $\HH$).
With abuse of notation, we write $\NN (f|\mu, \Sigma)$ in short. {This notation does not mean a GP has a Gaussian distribution in $\HH$, nor does it imply that the sample paths from $\GP(m,k)$ are necessarily in $\HH$. Precisely, $\BB$ contains the sample paths of $\GP(m,k)$ and $\HH$ is dense in $\BB$.} In most applications of GP models, $\BB$ is the Banach space of continuous function $C(\XX;\YY)$ and $\HH$ is the span of the covariance function. As a special case, if $\HH$ is finite-dimensional, $\BB$ and $\HH$ coincide and $\nu$ becomes equivalent to a Gaussian distribution in a Euclidean space. 
 
In relation to our previous notation in Section~\ref{sec:GP inference}: suppose $k(x,x') = \phi_x^T \phi_{x'} $ and  $\phi_x: \XX \to \HH$ is a feature map to some Hilbert space $\HH$. Then we have assumed \emph{a priori} that $\GP(m,k) = \NN (f|0, I)$ is a normal Gaussian measure; that is $\GP(m,k)$ samples functions $f$ in the form $f(x) = \sum_{l=1}^{\dim \HH} \phi_l(x)^T \epsilon_l$, where $\epsilon_l \sim \NN(0,1)$ are independent. Note if $\dim \HH = \infty$, with probability one $f$ is not in $\HH$, but fortunately $\HH$ is large enough for us to approximate the sampled functions.
In particular, it can be shown that the posterior  $\GP(m_{|\mbf
y}, k_{|\mbf{y}})$ in GPR has a dual RKHS representation in the same RKHS as the prior GP~\cite{cheng2016incremental}.

\subsection{Variational Inference in Gaussian Measures} 
\label{sec:Variational Inference of Gaussian Measures}

\citet{cheng2016incremental} proposes a dual formulation of~\eqref{eq:Var GP obj} in terms of Gaussian measures\footnote{
We assume $q(f)$ is absolutely continuous wrt $p(f)$, which is true as $p(f)$ is non-degenerate. The integral denotes the expectation of {\scriptsize$\log p_\theta(y|f) + \log \frac{p(f)} {q(f)}$ } over $q(f)$,  and {\scriptsize$ \frac{q(f)} {p(f)}$} denotes the  Radon-Nikodym derivative.}:
\begin{align}
\max_{q(f),\theta}  \LL_\theta(q(f)) = 	\max_{q(f),\theta}  \int  q(f) \log \frac{p_\theta(y|f)p(f)} {q(f)}  \der{f}   =  \max_{q(f),\theta}\E_{q}[ \log  p_\theta(y|f) ] - \KL{q}{p},
\label{eq:Var GP obj in RKHS} 
\end{align}
where $q(f) = \NN(f|\tilde{\mu}, \tilde{\Sigma})$ is a variational Gaussian measure and $p(f) = \NN(f|0,I)$ is a normal prior. 
Its connection to the inducing points/functions in \eqref{eq:Var GP obj} can be summarized as follows~\cite{cheng2016incremental,cheng2016learn}:  Define a linear operator
$
\Psi_{\tilde{X}}: \R^M \to \HH 
$ as $ \mbf{a} \mapsto \sum_{m=1}^{M} a_m \psi_{\tilde{x}_m}$, where  $\psi_{\tilde{x}_m} \in \HH $ is defined such that $ \psi_{\tilde{x}_m}^T \mu = \E[ L_m f(\tilde{x}_m)] $. Then~\eqref{eq:Var GP obj} and~\eqref{eq:Var GP obj in RKHS} are equivalent, if $q(f)$ has a \emph{subspace parametrization},  
\begin{align} 
\tilde{\mu} &= \Psi_{\tilde{X}} \mbf{a}, \quad \tilde{\Sigma} =  I + \Psi_{\tilde{X}} \mbf{A} \Psi_{\tilde{X}}^T,\label{eq:subspace param}
\end{align}
with $\mbf{a}\in \R^M$ and $\mbf{A}\in \R^{M\times M}$ 
satisfying
$
\tilde{\mbf{m}} = \mbf{K}_{\tilde{X}} \mbf{a},
$ and $ 
\tilde{\mbf{S}} = \mbf{K}_{\tilde{X}} +  \mbf{K}_{\tilde{X}} \mbf{A} \mbf{K}_{\tilde{X}} 
$. In other words, the variational inference algorithms in the literature are all using a variational Gaussian measure in which $\tilde{\mu}$ and $\tilde{\Sigma}$ are parametrized by the same basis $\{ \psi_{\tilde{x}_m} | \tilde{x}_m \in \tilde{X} \}_{i=1}^M$.

Compared with~\eqref{eq:Var GP obj}, the formulation in~\eqref{eq:Var GP obj in RKHS} is neater: it follows the definition of the very basic variational inference problem. This is not surprising, since GPs can be viewed as Bayesian linear models in an infinite-dimensional space. Moreover, in~\eqref{eq:Var GP obj in RKHS} all hyper-parameters are isolated in the likelihood $p_{\theta}(y|f)$, because the prior is fixed as a normal Gaussian measure.

\subsection{Disentangling the GP Representation with \ourModel{}s} \label{sec:Disentanglement of Representation}

While \citet{cheng2016incremental} treat~\eqref{eq:Var GP obj in RKHS} as an equivalent form of~\eqref{eq:Var GP obj}, here we show that it is a generalization. By further inspecting~\eqref{eq:Var GP obj in RKHS}, it is apparent that sharing the basis $\Psi_{\tilde{X}}$ between $\tilde{\mu}$ and $\tilde{\Sigma}$ in~\eqref{eq:subspace param} is not strictly necessary, since~\eqref{eq:Var GP obj in RKHS} seeks to optimize two linear operators, $\tilde{\mu}$ and $\tilde{\Sigma}$. With this in mind, we propose a new parametrization that \emph{decouples} the bases for $\tilde{\mu}$ and $\tilde{\Sigma}$: 
\begin{align} 
\tilde{\mu} &= \Psi_{\alpha} \mbf{a}, \quad \tilde{\Sigma} =  (I + \Psi_{\beta} \mbf{B} \Psi_{\beta}^{T} )^{-1} \label{eq:decoupled subspace param}
\end{align}
where $\Psi_{\alpha}:\R^{M_\alpha} \to \HH$ and $\Psi_{\beta}: \R^{M_\beta} \to \HH$ denote linear operators defined similarly to $\Psi_{\tilde{X}}$ and $\mbf{B}  \succeq 0 \in \R^{M_\beta \times M_\beta}$. Compared with \eqref{eq:subspace param}, here we parametrize $\tilde{\Sigma}$ through its inversion with $\mbf{B}$ so the condition that $\tilde{\Sigma} \succeq 0$ can be easily realized as $\mbf{B} \succeq 0 $. This form agrees with the posterior covariance in GPR~\cite{cheng2016incremental} and will give a posterior that is strictly less uncertain than the prior. Note the choice of decoupled parametrization is not unique. In particular, the bases can be partially shared, or $(\mbf{a}, \mbf{B})$ can be further parametrized (e.g. $\mbf{B}$ can be parametrized using the canonical form in~\eqref{eq:vsgpr}) to improve the numerical convergence rate. Please refer to Appendix~\ref{app:practical notes} for a discussion.\footnote{Appendix~\ref{app:practical notes} is partially based on a discussion with Hugh Salimbeni at the NIPS conference. Here we adopt the fully decoupled, directly parametrized form in \eqref{eq:decoupled subspace param} to demonstrate the idea. We leave the full comparison of different decoupled parametrizations in future work.
}

The decoupled subspace  parametrization \eqref{eq:decoupled subspace param} corresponds to a \ourModel{}, $\GP(\hat{m}_{|\mbf{y}}^{\alpha}, \hat{k}_{|\mbf{y}}^{\beta})$, with mean and covariance functions as 
\footnote{In practice, we can parametrize  $\mbf{B} = \mbf{L}\mbf{L}^T $ with Cholesky factor $\mbf{L}  \in \R^{M_\beta \times M_\beta} $ so the problem is unconstrained. The required terms in \eqref{eq:decoupled GP} and later in \eqref{eq:KL} can be stably computed as $ \left( \mbf{B}^{-1} + \mbf{K}_{\beta}  \right)^{-1} = \mbf{L} \mbf{H}^{-1} \mbf{L}^T  $ and $\log |\mbf{I}  + \mbf{K}_{{\beta}} \mbf{B}| = \log |\mbf{H}| $, where  $\mbf{H} = \mbf{I} + \mbf{L}^T \mbf{K}_{\beta}\mbf{L} $.}
\begin{align}
\hat{m}_{|\mbf{y}}^{\alpha}(x) =  \mbf{k}_{x, \alpha} \bm{a}, \qquad
\hat{k}_{|\mbf{y}}^{\beta}(x,x') =  k_{x, x'}   -  \mbf{k}_{x,\beta}  \left( \mbf{B}^{-1} + \mbf{K}_{\beta}  \right)^{-1}  \mbf{k}_{\beta,x'}. 
\label{eq:decoupled GP}
\end{align}

While the structure of~\eqref{eq:decoupled GP} looks similar to~\eqref{eq:vsgpr}, directly replacing the basis $\tilde{X}$ in~\eqref{eq:vsgpr} with $\alpha$ and $\beta$ is not trivial. Because  the equations in~\eqref{eq:vsgpr} are derived from the traditional viewpoint of GPs as statistics on function values, the original optimization problem~\eqref{eq:Var GP obj} is not defined if $\alpha \neq \beta$ and therefore, it is not clear how to learn a decoupled representation traditionally. Conversely, by using the dual RKHS representation, the objective function to learn~\eqref{eq:decoupled GP} follows naturally from~\eqref{eq:Var GP obj in RKHS}, as we will show next. 

\subsection{\ourMethod{}: Algorithm and Analysis} 

Substituting the decoupled subspace parametrization~\eqref{eq:decoupled subspace param} into the variational inference problem in ~\eqref{eq:Var GP obj in RKHS} results in a numerical optimization problem: $\max_{q(f),\theta}\E_{q}[ \log  p_\theta(y|f) ] - \KL{q}{p}$  with 
\begin{align} 
\KL{q}{p} &= \frac{1}{2} \mbf{a}^T \mbf{K}_{{\alpha}}  \mbf{a} 
+ \frac{1}{2} \log |\mbf{I}  + \mbf{K}_{{\beta}} \mbf{B}|  + \frac{-1}{2}  \tr{ \mbf{K}_{\beta}(  \mbf{B}^{-1}  + \mbf{K}_{{\beta}})^{-1}}       \label{eq:KL}    \\
\E_{q}[ \log  p_\theta(y|f) ] &= \sum_{n=1}^{N} \E_{q({f(x_n)})}[ \log  p_\theta(y_n|f(x_n)) ] \label{eq:expected log-likeli} 
\end{align}
where each expectation is over a scalar Gaussian $q({f(x_n)})$ given by~\eqref{eq:decoupled GP} as functions of $(\mbf{a}, \alpha)$ and $(\mbf{B}, \beta)$. Our objective function contains \cite{hensman2015scalable} as a special case, which assumes $\alpha = \beta = \tilde{X}$. In addition, we note that ~\citet{hensman2015scalable} indirectly parametrize the posterior by $\tilde{\mbf{m}}$ and $\tilde{\mbf{S}}= \mbf{L}\mbf{L}^T$, whereas we parametrize directly by \eqref{eq:subspace param} with $\mbf{a}$ for scalability and $\mbf{B} = \mbf{L}\mbf{L}^T $ for better stability (which always reduces the uncertainty in the posterior compared with the prior).

We notice that $(\mbf{a},\alpha)$ and $(\mbf{B},\beta)$ are completely decoupled in~\eqref{eq:KL} and \emph{potentially} combined again in \eqref{eq:expected log-likeli}. In particular, if $  p_\theta(y_n|f(x_n))$ is Gaussian as in GPR, we have an additional decoupling, i.e.
$
\LL_{\theta}(\mbf{a},\mbf{B},\alpha, \beta) = \FF_{\theta}(\mbf{a},\alpha)  + \GG_{\theta}(\mbf{B}, \beta)
$
for some $\FF_{\theta}(\mbf{a},\alpha)$  and $\GG_{\theta}(\mbf{B}, \beta)$. 
Intuitively, the optimization over $(\mbf{a}, \alpha)$ aims to minimize the fitting-error, and the optimization over $(\mbf{B},\beta)$ aims to memorize the samples encountered so far; the mean and the covariance functions only interact indirectly through the optimization of the hyper-parameter $\theta$. 

One salient feature of~\ourMethod{} is that it tends to overestimate, rather than underestimate, the variance, when we select $M_\beta \leq M_\alpha$. This is inherited from the non-degeneracy property of the variational framework~\cite{titsias2009variational} and can be seen in the toy example in Figure~\ref{fig:toy example}. In the extreme case when $M_\beta = 0$, we can see the covariance in~\eqref{eq:decoupled GP} becomes the same as the prior; moreover, the objective function of~\ourMethod{} becomes similar to kernel methods (exactly the same as kernel ridge regression, when the likelihood is Gaussian). The additional inclusion of expected log-likelihoods here allows \ourMethod{} to learn the hyper-parameters in a unified framework, as its objective function can be viewed as minimizing a generalization upper-bound in PAC-Bayes learning~\citep{germain2016pac}. 

\ourMethod{} solves the above optimization problem by stochastic gradient ascent. 
Here we purposefully ignore specific details of $p_\theta(y|f)$ to emphasize that~\ourMethod{} can be applied to general likelihoods as it only requires unbiased first-order information, which e.g. can be found in~\citep{sheth2015sparse}. 
In addition to having a more adaptive representation, the main benefit of~\ourMethod{} is that the computation of an unbiased gradient requires only linear complexity in $M_\alpha$, as shown below (see Appendix~\ref{app:VIDGP}for details).

\paragraph{KL-Divergence}
Assume  $|\alpha| = O(DM_\alpha)$ and $|\beta| = O(DM_\beta)$.  
By~\eqref{eq:KL}, One can show 
$ \nabla_{\mbf{a}} \KL{q}{p} =  \mbf{K}_{{\alpha}}  \mbf{a}$ and $\nabla_{\mbf{B}} \KL{q}{p} = \frac{1}{2} ( \mbf{I}  + \mbf{K}_{{\beta}} \mbf{B})^{-1}  \mbf{K}_{{\beta}} \mbf{B} \mbf{K}_{{\beta}}  (\mbf{I}  + \mbf{B}\mbf{K}_{{\beta}}  )^{-1}. $
Therefore, the time complexity to compute $\nabla_{\mbf{a}} \KL{q}{p}$ can be reduced to $O( N_m M_\alpha)$ if we sample over the columns of $\mbf{K}_{\alpha}$ with a mini-batch of size $N_m$. 
By contrast, the time complexity to compute $\nabla_{\mbf{B}} \KL{q}{p}$ will always be $\Theta(M_\beta^3)$ and cannot be further reduced, regardless of the parametrization of $\mbf{B}$.\footnote{Due to $ \mbf{K}_{{\beta}}$, the complexity would remain as $O(M_\beta^3)$ even if $\mbf{B}$ is constrained to be diagonal.}
The gradient with respect to $\alpha$ and $\beta$ can be derived similarly and have time complexity $O( D N_m M_\alpha)$ and  $O(D  M_\beta^2 + M_\beta^3)$, respectively.

\paragraph{Expected Log-Likelihood}
Let  $\hat{\mbf{m}}(\mbf{a}, \alpha) \in \R^N$ and $\hat{\mbf{s}}(\mbf{B}, \beta) \in \R^N$ be the vectors of the mean and covariance of scalar Gaussian $q({f(x_n)})$ for $n \in \{1, \dots, N\}$.
As~\eqref{eq:expected log-likeli} is a sum over $N$ terms, by sampling with a mini-batch of size $N_m$, an unbiased gradient of~\eqref{eq:expected log-likeli} with respect to $(\theta, \hat{\mbf{m}}, \hat{\mbf{s}})$ can be computed in $O(N_m)$. To compute the full gradient with respect to $(\mbf{a},\mbf{B},\alpha, \beta)$, we compute the derivative of $\hat{\mbf{m}}$ and $\hat{\mbf{s}}$ with respect to $(\mbf{a},\mbf{B},\alpha, \beta)$ and then apply chain rule. These steps take  $O( DN_m  M_\alpha)$ and $O(D N_m M_\beta+ N_m M_\beta^2 + M_\beta^3)$ for $(\mbf{a},\alpha)$ and $(\mbf{B},\beta)$, respectively.

The above analysis shows that the curse of dimensionality in GPs originates in the covariance function. For space complexity, the decoupled  parametrization~\eqref{eq:decoupled subspace param} requires  memory in $O( N_m M_\alpha + M_\beta^2)$; for time complexity, an unbiased  gradient with respect to  $(\mbf{a},\alpha)$ can be computed in $O(D N_m M_\alpha)$, but that with respect to $(\mbf{B},\beta)$  has time complexity $\Omega(D N_m M_\beta+ N_m M_\beta^2 + M_\beta^3)$.  This motivates choosing $M_\beta = O(M)$ and $M_\alpha$ in $O(M_\beta^{2} )$ or $O(M_\beta^{3})$, which maintains the same complexity as previous variational techniques but greatly improves the prediction performance.

\begin{algorithm}[t]
	{\footnotesize
		\caption{Online Learning with \ourModel{}s}\label{alg:algo}
		\begin{algorithmic} [1]
			\renewcommand{\algorithmicrequire}{\textbf{Parameters:}}
			\renewcommand{\algorithmicensure}{\textbf{Input:}}		
			\REQUIRE $M_\alpha$, $M_\beta$, $N_m$, $N_{\Delta}$
			\ENSURE  $\MM(\mbf{a},\mbf{B},\alpha,\beta, \theta)$ , $\DD$ 
			\STATE $\theta_0$  $\leftarrow$ initializeHyperparameters(  sampleMinibatch($\DD$, $N_m$) )
			\FOR{$t = 1\dots T$}
			\STATE $D_t$  $\leftarrow$ sampleMinibatch($\DD$, $N_m$)
			\STATE $\MM$.addBasis($D_t$, $N_{\Delta}$, $M_\alpha$, $M_\beta$)
			\STATE $\MM$.updateModel($D_t$, t)
			\ENDFOR
		\end{algorithmic}
	} 
\end{algorithm}

\section{Experimental Results} \label{sec:experiments}
We compare our new algorithm, \ourMethod{}, with the state-of-the-art incremental algorithms for sparse variational GPR, \textsc{svi}~\cite{hensman2013gaussian} and i\textsc{vsgpr}~\cite{cheng2016incremental}, as well as the classical \textsc{gpr} and the batch algorithm \textsc{vsgpr}~\citep{titsias2009variational}. As discussed in Section~\ref{sec:related work}, these methods can be viewed as different ways to optimize~\eqref{eq:Var GP obj in RKHS}. Therefore, in addition to the normalized mean square error (nMSE)~\cite{rasmussen2006gaussian} in prediction,
we report the performance in the variational lower bound (VLB)~\eqref{eq:Var GP obj in RKHS}, which also captures the quality of the predictive variance and hyper-parameter learning.\footnote{The exact marginal likelihood is computationally infeasible to evaluate for our large model.} These two metrics are evaluated on  held-out test sets in all of our experimental domains.

Algorithm~\ref{alg:algo} summarizes the online learning procedure used by all stochastic algorithms,\footnote{The algorithms differs only in whether the bases are shared and how the model is updated (see Table~\ref{tb:algorithms}).} where each learner has to optimize all the parameters on-the-fly using \emph{i.i.d.} data. The hyper-parameters are first initialized heuristically by median trick using the first mini-batch. We incrementally build up the variational posterior by including $N_{\Delta}\leq N_m$ observations in each mini-batch as the \emph{initialization} of new variational basis functions. Then all the hyper-parameters and the variational parameters are updated online. These steps are repeated for $T$ iterations.

For all the algorithms, we assume the prior covariance is defined by the \textsc{se-ard} kernel~\cite{rasmussen2006gaussian} and we use the generalized \textsc{se-ard} kernel~\cite{cheng2016incremental} as the inducing functions in the variational posterior (see Appendix~\ref{app:exp setup} for details). We note that all algorithms in comparison use the same kernel and optimize both the variational parameters (including inducing points) and the hyperparameters.

In particular,  we implement \textsc{sga} by \textsc{adam}~\cite{kingma2014adam} (with default parameters  $\beta_1 =0.9$ and $\beta_2 = 0.999$). The step-size for each stochastic algorithms is scheduled according to $\gamma_t = \gamma_0 ( 1+ 0.1 \sqrt{t})^{-1}$, where $\gamma_0 \in \{10^{-1}, 10^{-2}, 10^{-3}\}$ is selected manually for each algorithm to maximize the improvement in objective function after the first 100 iterations. 
We test each stochastic algorithm for $T =2000$ iterations with mini-batches of size $N_m = 1024$ and the increment size  $N_\Delta = 128$. Finally, the model sizes used in the experiments are listed as follows: $M_\alpha = 128^2$ and $M_\beta = 128$ for \ourMethod{}; $M = 1024$ for \textsc{svi}; $M = 256$ for i\textsc{vsgpr}; $M= 1024$, $N = 4096$ for \textsc{vsgpr}; $N = 1024$ for \textsc{gp}. These settings share similar order of time complexity in our current Matlab implementation.

\subsection{Datasets}

\paragraph{Inverse Dynamics of KUKA Robotic Arm} This dataset records the inverse dynamics of a KUKA arm performing rhythmic motions at various speeds~\cite{meier2014incremental}. The original dataset consists of two parts: \textsc{kuka}$_1$ and \textsc{kuka}$_2$, each of which have 17,560 offline data and 180,360 online data with 28 attributes and 7 outputs. In the experiment, we mix the online and the offline data and then split 90\% as training data (178,128 instances) and 10\% testing data (19,792 instances) to satisfy the \emph{i.i.d.} assumption.

\paragraph{Walking MuJoCo}
MuJoCo (Multi-Joint dynamics with Contact) is a physics engine for research in robotics, graphics, and animation, created by~\citep{todorov2012mujoco}. In this experiment, we gather 1,000 walking trajectories by running \textsc{trpo}~\cite{schulman2015trust}. In each time frame, the MuJoCo transition dynamics have a 23-dimensional input and a 17-dimensional output. 
We consider two regression problems to predict 9 of the 17 outputs from the input\footnote{Because of the structure of MuJoCo dynamics, the rest 8 outputs can be trivially known from the input.}: \textsc{mujoco}$_1$ which maps the input of the current frame (23 dimensions) to the output, and \textsc{mujoco}$_2$ which maps the inputs of the current and the previous frames (46 dimensions) to the output. In each problem, we randomly select 90\% of the data as training data (842,745 instances) and 10\% as test data (93,608 instances).

\begin{table*}[!t] \vspace{-5mm}
	\centering
	{\scriptsize
		\subfloat{
			\begin{tabular}{cccccc}
				\multicolumn{6}{c}{ \textsc{kuka}$_1$ -  Variational Lower Bound ($10^5$) } \\
				\toprule
				& \ourMethod{}  & \textsc{svi}   &  i\textsc{vsgpr} & \textsc{vsgpr} & \textsc{gpr} \\
				\midrule	  
				mean & \textbf{1.262} & 0.391 & 0.649 & 0.472 & -5.335\\   
				std & \textbf{0.195} & 0.076 & 0.201 & 0.265 & 7.777\\   
				\bottomrule
			\end{tabular}} 
			\hspace{5mm}
			\subfloat{
				\begin{tabular}{cccccc}
					\multicolumn{6}{c}{ \textsc{kuka}$_1$ - Prediction Error (nMSE) } \\
					\toprule
					& \ourMethod{}  & \textsc{svi}   &  i\textsc{vsgpr} & \textsc{vsgpr} & \textsc{gpr} \\    
					\midrule
					mean & \textbf{0.037} & 0.169 & 0.128 & 0.139 & 0.231\\    
					std & \textbf{0.013} & 0.025 & 0.033 & 0.026 & 0.045\\     
					\bottomrule
				\end{tabular}} 
				\\
				\subfloat{
					\begin{tabular}{cccccc}
						\multicolumn{6}{c}{  \textsc{mujoco}$_1$ -  Variational Lower Bound ($10^5$) } \\
						\toprule
						& \ourMethod{}  & \textsc{svi}   &  i\textsc{vsgpr} & \textsc{vsgpr} & \textsc{gpr} \\
						\midrule
						mean & \textbf{6.007} & 2.178 & 4.543 & 2.822 & -10312.727\\   
						std & \textbf{0.673} & 0.692 & 0.898 & 0.871 & 22679.778\\    
						\bottomrule
					\end{tabular}} 
					\hspace{5mm}
					\subfloat{
						\begin{tabular}{cccccc}
							\multicolumn{6}{c}{ \textsc{mujoco}$_1$ - Prediction Error (nMSE) } \\
							\toprule
							& \ourMethod{}  & \textsc{svi}   &  i\textsc{vsgpr} & \textsc{vsgpr} & \textsc{gpr} \\         
							\midrule 
							mean & \textbf{0.072} & 0.163 & 0.099 & 0.118 & 0.213\\    
							std & \textbf{0.013} & 0.053 & 0.026 & 0.016 & 0.061\\     
							\bottomrule
						\end{tabular}} 
					}
					\caption{Experimental results of \textsc{kuka}$_1$ and \textsc{mujoco}$_1$ after 2,000 iterations.} \label{tb:results}
				\end{table*} 

\begin{figure*}[t]	
	\centering
	\subfloat[Sample Complexity]{\includegraphics[width=0.4\textwidth]{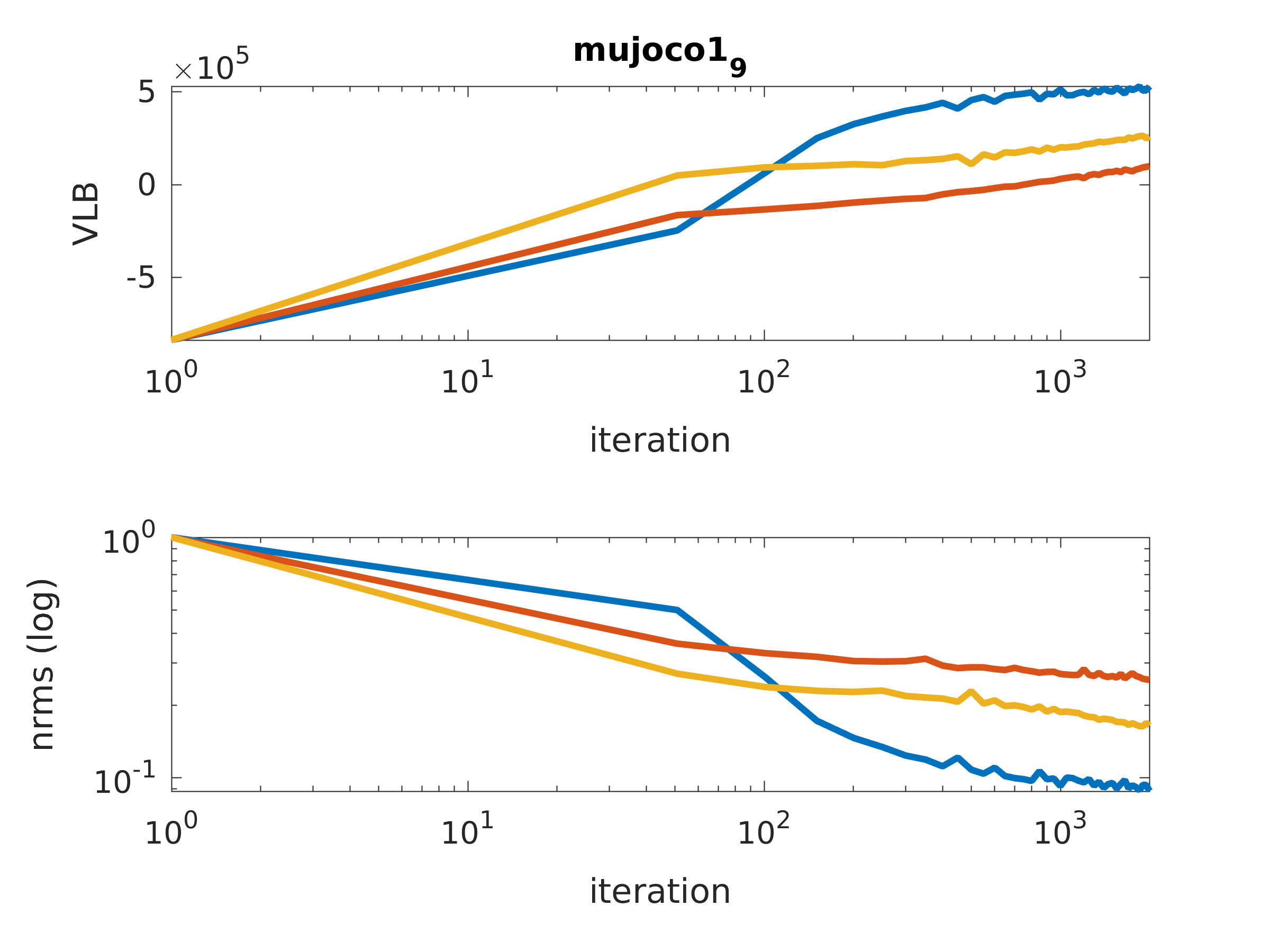} } \qquad \subfloat[Time Complexity]{\includegraphics[width = 0.4\textwidth]{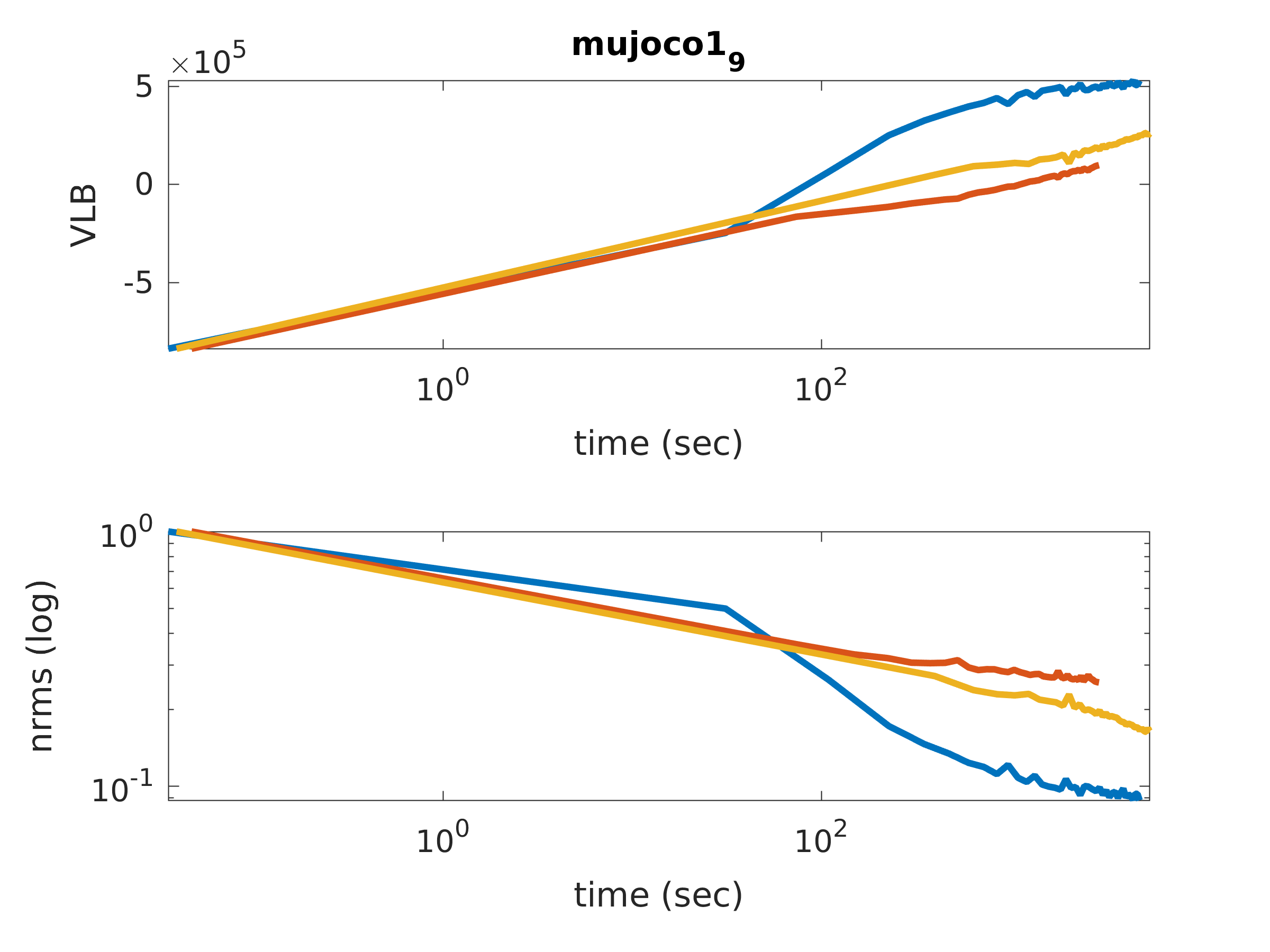}}
	\caption{An example of online learning results  (the 9th output of \textsc{mujoco}$_1$  dataset). The blue, red, and yellow lines denote \ourMethod, \textsc{svi}, and i\textsc{vsgpr}, respectively.} \label{fig:Online Learning Results}
\end{figure*}

\subsection{Results} 
We summarize part of the experimental results in Table~\ref{tb:results} in terms of nMSE in prediction and VLB.
While each output is treated independently during learning, Table~\ref{tb:results}  present the mean and the standard deviation over all the outputs as the selected metrics are normalized. For the complete experimental results, please refer to  Appendix~\ref{app:exp results}.

We observe that \ourMethod{} consistently outperforms the other approaches with much higher VLBs and much lower prediction errors; \ourMethod{} also has smaller standard deviation.
These results validate our initial hypothesis that adopting a large set of basis functions for the mean  can  help when modeling complicated functions. 
i\textsc{vsgpr} has the next best result after \ourMethod{}, despite using a basis size of 256, much smaller than that of 1,024 in \textsc{svi}, \textsc{vsgpr}, and \textsc{gpr}. Similar to \ourMethod{},  i\textsc{vsgpr} also generalizes better than the batch algorithms \textsc{vsgpr} and \textsc{gpr}, which only have access to a smaller set of training data and are more prone to over-fitting. 
By contrast, the performance of \textsc{svi} is surprisingly worse than \textsc{vsgpr}. We conjecture this might be due to the fact that the hyper-parameters and the inducing points/functions are only crudely initialized in online learning. We additionally find that the stability of \textsc{svi} is more sensitive to the choice of step size than other methods.
This might explain why in~\citep{hensman2013gaussian,cheng2016incremental} batch data was used to initialize the hyper-parameters and the learning rate to update the hyper-parameters was selected to be much smaller than that for stochastic natural gradient ascent.

To further investigate the properties of different stochastic approximations, we show the change of VLB and the prediction error over iterations and time in Figure~\ref{fig:Online Learning Results}.
Overall, whereas~i\textsc{vsgpr} and \textsc{svi} share similar convergence rate, the behavior of \ourMethod{} is different. 
We see that i\textsc{vsgpr} converges the fastest, both in time and sample complexity. Afterwards, \ourMethod{} starts to descend faster and surpass the other two  methods. From Figure~\ref{fig:Online Learning Results}, we can also observe that although~\textsc{svi} has similar convergence to i\textsc{vsgpr}, it slows down earlier and therefore achieves a worse result. 
These phenomenon are observed in multiple experiments.

\section{Conclusion}
We propose a novel, fully-differentiable framework, Decoupled Gaussian Processes \ourModel{}s, for large-scale GP problems. By decoupling the representation, we derive a variational inference problem that can be solved with stochastic gradients with \emph{linear} time and space complexity. Compared with existing  algorithms, \ourMethod{} can adopt a much larger set of basis functions to predict more accurately. Empirically, \ourMethod{} significantly outperforms state-of-the-arts variational sparse GPR algorithms in multiple regression tasks. These encouraging experimental results motivate further application of \ourMethod{} to end-to-end learning with neural networks in large-scale, complex real world problems.

\subsubsection*{Acknowledgments}
This work was supported in part by NSF NRI award 1637758. The authors additionally thank the reviewers and Hugh Salimbeni for productive discussion which improved the quality of the paper.

{
\bibliography{ref}

\begin{thebibliography}{27}
\providecommand{\natexlab}[1]{#1}
\providecommand{\url}[1]{\texttt{#1}}
\expandafter\ifx\csname urlstyle\endcsname\relax
  \providecommand{\doi}[1]{doi: #1}\else
  \providecommand{\doi}{doi: \begingroup \urlstyle{rm}\Url}\fi

\bibitem[Bauer et~al.(2016)Bauer, van~der Wilk, and
  Rasmussen]{bauer2016understanding}
Matthias Bauer, Mark van~der Wilk, and Carl~Edward Rasmussen.
\newblock Understanding probabilistic sparse {Gaussian} process approximations.
\newblock In \emph{Advances in Neural Information Processing Systems}, pages
  1525--1533, 2016.

\bibitem[Cheng and Boots(2016)]{cheng2016incremental}
Ching-An Cheng and Byron Boots.
\newblock Incremental variational sparse {Gaussian} process regression.
\newblock In \emph{Advances in Neural Information Processing Systems}, pages
  4403--4411, 2016.

\bibitem[Cheng and Huang(2016)]{cheng2016learn}
Ching-An Cheng and Han-Pang Huang.
\newblock Learn the {Lagrangian}: A vector-valued {RKHS} approach to
  identifying {Lagrangian} systems.
\newblock \emph{IEEE Transactions on Cybernetics}, 46\penalty0 (12):\penalty0
  3247--3258, 2016.

\bibitem[Csat{\'o} and Opper(2001)]{csato2001sparse}
Lehel Csat{\'o} and Manfred Opper.
\newblock Sparse representation for {Gaussian} process models.
\newblock \emph{Advances in Neural Information Processing Systems}, pages
  444--450, 2001.

\bibitem[Dai et~al.(2014)Dai, Xie, He, Liang, Raj, Balcan, and
  Song]{dai2014scalable}
Bo~Dai, Bo~Xie, Niao He, Yingyu Liang, Anant Raj, Maria-Florina~F Balcan, and
  Le~Song.
\newblock Scalable kernel methods via doubly stochastic gradients.
\newblock In \emph{Advances in Neural Information Processing Systems}, pages
  3041--3049, 2014.

\bibitem[Eldredge(2016)]{eldredge2016analysis}
Nathaniel Eldredge.
\newblock Analysis and probability on infinite-dimensional spaces.
\newblock \emph{arXiv preprint arXiv:1607.03591}, 2016.

\bibitem[Figueiras-Vidal and L{\'a}zaro-gredilla(2009)]{figueiras2009inter}
Anibal Figueiras-Vidal and Miguel L{\'a}zaro-gredilla.
\newblock Inter-domain {Gaussian} processes for sparse inference using inducing
  features.
\newblock In \emph{Advances in Neural Information Processing Systems}, pages
  1087--1095, 2009.

\bibitem[Germain et~al.(2016)Germain, Bach, Lacoste, and
  Lacoste-Julien]{germain2016pac}
Pascal Germain, Francis Bach, Alexandre Lacoste, and Simon Lacoste-Julien.
\newblock Pac-bayesian theory meets bayesian inference.
\newblock In \emph{Advances in Neural Information Processing Systems}, pages
  1884--1892, 2016.

\bibitem[Gross(1967)]{gross1967abstract}
Leonard Gross.
\newblock Abstract wiener spaces.
\newblock In \emph{Proceedings of the Fifth Berkeley Symposium on Mathematical
  Statistics and Probability, Volume 2: Contributions to Probability Theory,
  Part 1}, pages 31--42. University of California Press, 1967.

\bibitem[Hensman et~al.(2013)Hensman, Fusi, and Lawrence]{hensman2013gaussian}
James Hensman, Nicolo Fusi, and Neil~D. Lawrence.
\newblock {Gaussian} processes for big data.
\newblock \emph{arXiv preprint arXiv:1309.6835}, 2013.

\bibitem[Hensman et~al.(2015)Hensman, Matthews, and
  Ghahramani]{hensman2015scalable}
James Hensman, Alexander G. de~G. Matthews, and Zoubin Ghahramani.
\newblock Scalable variational {Gaussian} process classification.
\newblock In \emph{International Conference on Artificial Intelligence and
  Statistics}, 2015.

\bibitem[Hensman et~al.(2016)Hensman, Durrande, and
  Solin]{hensman2016variational}
James Hensman, Nicolas Durrande, and Arno Solin.
\newblock Variational {Fourier} features for {Gaussian} processes.
\newblock \emph{arXiv preprint arXiv:1611.06740}, 2016.

\bibitem[Kingma and Ba(2014)]{kingma2014adam}
Diederik Kingma and Jimmy Ba.
\newblock Adam: A method for stochastic optimization.
\newblock \emph{arXiv preprint arXiv:1412.6980}, 2014.

\bibitem[Kivinen et~al.(2004)Kivinen, Smola, and Williamson]{kivinen2004online}
Jyrki Kivinen, Alexander~J Smola, and Robert~C Williamson.
\newblock Online learning with kernels.
\newblock \emph{IEEE transactions on signal processing}, 52\penalty0
  (8):\penalty0 2165--2176, 2004.

\bibitem[L\'{a}zaro-Gredilla et~al.(2010)L\'{a}zaro-Gredilla,
  Qui{\~n}onero-Candela, Rasmussen, and Figueiras-Vidal]{quia2010sparse}
Miguel L\'{a}zaro-Gredilla, Joaquin Qui{\~n}onero-Candela, Carl~Edward
  Rasmussen, and An{\'i}bal~R. Figueiras-Vidal.
\newblock Sparse spectrum {Gaussian} process regression.
\newblock \emph{Journal of Machine Learning Research}, 11\penalty0
  (Jun):\penalty0 1865--1881, 2010.

\bibitem[Matthews et~al.(2016)Matthews, Hensman, Turner, and
  Ghahramani]{matthews2016sparse}
Alexander G. de~G. Matthews, James Hensman, Richard~E. Turner, and Zoubin
  Ghahramani.
\newblock On sparse variational methods and the {Kullback-Leibler} divergence
  between stochastic processes.
\newblock In \emph{Proceedings of the Nineteenth International Conference on
  Artificial Intelligence and Statistics}, 2016.

\bibitem[Meier et~al.(2014)Meier, Hennig, and Schaal]{meier2014incremental}
Franziska Meier, Philipp Hennig, and Stefan Schaal.
\newblock Incremental local {Gaussian} regression.
\newblock In \emph{Advances in Neural Information Processing Systems}, pages
  972--980, 2014.

\bibitem[Qui{\~n}onero-Candela and Rasmussen(2005)]{quinonero2005unifying}
Joaquin Qui{\~n}onero-Candela and Carl~Edward Rasmussen.
\newblock A unifying view of sparse approximate {Gaussian} process regression.
\newblock \emph{The Journal of Machine Learning Research}, 6:\penalty0
  1939--1959, 2005.

\bibitem[Rasmussen and Williams(2006)]{rasmussen2006gaussian}
Carl~Edward Rasmussen and Christopher K.~I. Williams.
\newblock {Gaussian} processes for machine learning.
\newblock 2006.

\bibitem[Schulman et~al.(2015)Schulman, Levine, Abbeel, Jordan, and
  Moritz]{schulman2015trust}
John Schulman, Sergey Levine, Pieter Abbeel, Michael~I. Jordan, and Philipp
  Moritz.
\newblock Trust region policy optimization.
\newblock In \emph{Proceedings of the 32nd International Conference on Machine
  Learning}, pages 1889--1897, 2015.

\bibitem[Seeger et~al.(2003)Seeger, Williams, and Lawrence]{seeger2003fast}
Matthias Seeger, Christopher Williams, and Neil Lawrence.
\newblock Fast forward selection to speed up sparse {Gaussian} process
  regression.
\newblock In \emph{Artificial Intelligence and Statistics 9}, number
  EPFL-CONF-161318, 2003.

\bibitem[Sheth et~al.(2015)Sheth, Wang, and Khardon]{sheth2015sparse}
Rishit Sheth, Yuyang Wang, and Roni Khardon.
\newblock Sparse variational inference for generalized {GP} models.
\newblock In \emph{Proceedings of the 32nd International Conference on Machine
  Learning}, pages 1302--1311, 2015.

\bibitem[Snelson and Ghahramani(2005)]{snelson2005sparse}
Edward Snelson and Zoubin Ghahramani.
\newblock Sparse {Gaussian} processes using pseudo-inputs.
\newblock In \emph{Advances in Neural Information Processing Systems}, pages
  1257--1264, 2005.

\bibitem[Titsias(2009)]{titsias2009variational}
Michalis~K. Titsias.
\newblock Variational learning of inducing variables in sparse {Gaussian}
  processes.
\newblock In \emph{International Conference on Artificial Intelligence and
  Statistics}, pages 567--574, 2009.

\bibitem[Todorov et~al.(2012)Todorov, Erez, and Tassa]{todorov2012mujoco}
Emanuel Todorov, Tom Erez, and Yuval Tassa.
\newblock Mujoco: A physics engine for model-based control.
\newblock In \emph{IEEE/RSJ International Conference on Intelligent Robots and
  Systems}, pages 5026--5033. IEEE, 2012.

\bibitem[Walder et~al.(2008)Walder, Kim, and Sch{\"o}lkopf]{walder2008sparse}
Christian Walder, Kwang~In Kim, and Bernhard Sch{\"o}lkopf.
\newblock Sparse multiscale {Gaussian} process regression.
\newblock In \emph{Proceedings of the 25th international conference on Machine
  learning}, pages 1112--1119. ACM, 2008.

\bibitem[Wilson and Nickisch(2015)]{wilson2015kernel}
Andrew Wilson and Hannes Nickisch.
\newblock Kernel interpolation for scalable structured {Gaussian} processes
  ({KISS-GP}).
\newblock In \emph{Proceedings of the 32nd International Conference on Machine
  Learning}, pages 1775--1784, 2015.

\end{thebibliography}
\bibliographystyle{plainnat}}

\ifCompileAppendix
\clearpage
\onecolumn
\appendix
\section*{Appendix}

\section{Subspace Parametrization and Notes to Practitioners} \label{app:practical notes}

As mentioned in Section~\ref{sec:Disentanglement of Representation}, the choice of subspace parametrization is non-unique and not limited to 
\begin{align*} 
\tilde{\mu} &= \Psi_{\alpha} \mbf{a}, \quad \tilde{\Sigma} =  (I + \Psi_{\beta} \mbf{B} \Psi_{\beta}^{T} )^{-1}. 
\end{align*}
While we adopted the completely decoupled, directly parametrized representation in the experiments for its simplicity and sufficiency to validate our idea, this choice may not possess the best numerical properties. Here we point out other potential, practical parameterizations. A complete study of these choices is outside of the scope of this paper.

\subsection{Numerical Convergence Issues}
To understand the effect of the parametrization on the convergence rate,  intuition can be gained by inspecting the objective function
$\max_{q(f),\theta}\E_{q}[ \log  p_\theta(y|f) ] - \KL{q}{p}$, where 
\begin{align*} 
\KL{q}{p} &= \frac{1}{2} \mbf{a}^T \mbf{K}_{{\alpha}}  \mbf{a} 
+ \frac{1}{2} \log |\mbf{I}  + \mbf{K}_{{\beta}} \mbf{B}|  + \frac{-1}{2}  \tr{ \mbf{K}_{\beta}(  \mbf{B}^{-1}  + \mbf{K}_{{\beta}})^{-1}}       \\
\E_{q}[ \log  p_\theta(y|f) ] &= \sum_{n=1}^{N} \E_{q({f(x_n)})}[ \log  p_\theta(y_n|f(x_n)) ] 
\end{align*}
When the likelihood function is Gaussian, this leads to a problem which is quadratic in $\mbf{a}$. Therefore, the numerical convergence rate can be slow, for example, when $\mbf{K}_{\alpha}$ is ill-conditioned (which can happen especially  when $M_{\alpha}$ is large and $\Psi_{\alpha}$ is not flexible and correlated). Empirically, we observed a slow-down of convergence rate when the standard $\textsc{se-ard}$ kernel was used as the variational basis, compared with the generalized $\textsc{se-ard}$ kernel used in the experiments.

The use of a preconditioner can help the convergence of first-order methods in practice.  This can be achieved by further parameterizing $(\mbf{a}, \mbf{B})$.  For example, a Jacobi preconditioner can be optionally used in implementation by parameterizing $\mbf{a}$ and $\mbf{L}$ (i.e. $\mbf{B}= \mbf{L} \mbf{L}^T$) through $\mbf{a}_0$ and $\mbf{L}_0$ as
\begin{align*}
\mbf{a} = \diag(\mbf{K}_{\alpha})^{-1} \mbf{a}_0,  \qquad 	\mbf{L} = \diag(\mbf{K}_{\beta})^{-1} \mbf{L}_0
\end{align*}
where $\diag$ denotes  the diagonal part. While the Jacobi preconditioner is only $\rho^{-2} \mbf{I}$ for \textsc{se-ard} kernels for some scaling constant $\rho \in \R$, we observed it still helps the convergence rate as the hyperparameter $\rho$ is also being updated online.

Comparing the Jacobi preconditioner and~\eqref{eq:vsgpr}, we can see that the canonical parametrization in~\eqref{eq:vsgpr}  can be viewed as a full preconditioner, which can enjoy a better numerical convergence rate but at the cost of higher computational complexity. 
For the mean function, this type of full preconditioner, with $\mbf{K}_{\alpha}^{-1}$, is too expensive to compute when $M_{\alpha}$ is large; nonetheless, using an approximation, such as the Nystr\"om approximation or (incomplete) Cholesky decomposition, can potentially improve convergence without increasing the computational complexity. For the covariance function, because $M_\beta$ is small, a full preconditioner can be used, such as setting $\mbf{L} = \mbf{K}_{\beta}^{-1} \mbf{L}_0$ or parameterizing $\mbf{B}$ with the canonical choice~\eqref{eq:vsgpr} (see Appendix~\ref{app:hybrid subspace param} for a further discussion).

\subsection{Non-convexity and Initialization}
Since the optimization problem is non-convex, the performance of the model also hinges on how the variational basis functions are initialized. We note that, when using the completely decoupled parametrization as in the experiments, we advocate to initialize the new basis for the mean and the covariance to be the same samples (e.g. from the current mini-batch) and then updating them separately online. This would encourage $\tilde{\mu}$ and $\tilde{\Sigma}$ to capture the information in a close subspace. 

Another idea to improve stability is to partially share the basis functions. For example, the first $M_\beta$ basis functions of the total $M_\alpha$ basis functions for the mean can be constrained to be the same as the basis functions for the covariance. Due to the redundancy in the mean parametrization, sharing part of the parameters can make the problem more well-conditioned and easier to optimize.

\subsection{Hybrid Subspace Parametrization} \label{app:hybrid subspace param}

An example\footnote{The idea of combining the partially shared representation and the canonical parametrization is brought up by Hugh Salimbeni in our discussion at the conference.} that combines the ideas from the above two sections is a hybrid subspace parametrization of  the variational Gaussian measure:
\begin{align}
\tilde{\mu} = \Psi_{\tilde{X}} \mbf{K}_{\tilde{X}}^{-1} \tilde{\mbf{m}} + \Psi_{\alpha_r} \mbf{a}_r   \qquad 
\tilde{\Sigma} = I + \Psi_{\tilde{X}} \mbf{K}_{\tilde{X}}^{-1} \left( \tilde{\mbf{S}} -  \mbf{K}_{\tilde{X}}  \right) \mbf{K}_{\tilde{X}}^{-1} \Psi_{\tilde{X}}^T  \label{eq:subspace param hybrid}
\end{align}
where $\tilde{\mbf{m}} \in \R^M$ and $\tilde{\mbf{S}} \succeq 0  \in \R^{M \times M}$ are equivalent to the posterior statistics on $\tilde{X}$ used in the conventional shared representation,  $\alpha_r$ denotes the additional inducing points that model the residual error, and $\mbf{a}_r$ is the corresponding coefficient. This representation is equivalent to setting $\beta = \tilde{X}$ and $ \mbf{B}^{-1} = - (\mbf{K}_{\tilde{X}}  + \mbf{K}_{\tilde{X}}( \tilde{\mbf{S}} -  \mbf{K}_{\tilde{X}} )^{-1} \mbf{K}_{\tilde{X}} ) $. 
The hybrid subspace parametrization gives a predictive model in the form
\begin{align}
\hat{m}_{|\mbf{y}}^H(x) &=  \mbf{k}_{x, \tilde{X}}  \mbf{K}_{\tilde{X}}^{-1}\tilde{\mbf{m}} + \mbf{k}_{x, \alpha_r} \mbf{a}_r \label{eq:mean hybrid}  \\ 
\hat{k}_{|\mbf{y}}^H(x,x') &=  k_{x, x'}   +  \mbf{k}_{x, \tilde{X}} \mbf{K}_{\tilde{X}}^{-1} \left( \tilde{\mbf{S}} -  \mbf{K}_{\tilde{X}}  \right) \mbf{K}_{\tilde{X}}^{-1} \mbf{k}_{\tilde{X},x'} \label{eq:cov hybrid}
\end{align}
Compared with~\eqref{eq:vsgpr}, the only difference is the residual term $\mbf{k}_{x, \alpha_r} \mbf{a}_r$, which helps modeling more complex functions. 

Therefore, this new, hybrid formulation shows a more direct connection to the conventional parametrization used in the GP literature (e.g.~\citep{hensman2015scalable}). This can also been seen in its associated objective function. Substitute the hybrid parametrization into the terms in the  KL divergence between Gaussian measures and we have the following relations: 
\begin{align*}
\frac{1}{2} \mbf{a}^T \mbf{K}_{{\alpha}}  \mbf{a}  
= \frac{1}{2} \tilde{\mbf{m}}^T \mbf{K}_{\tilde{X}}^{-1} \tilde{\mbf{m}} + \tilde{\mbf{m}}^T \mbf{K}_{\tilde{X}}^{-1} \mbf{K}_{\tilde{X}, \alpha_r} \mbf{a}_r + \frac{1}{2} \mbf{a}_r^T \mbf{K}_{\alpha_r} \mbf{a}_r
\end{align*}
\begin{align*}
\frac{-1}{2}  \tr{ \mbf{K}_{\beta}(  \mbf{B}^{-1}  + \mbf{K}_{{\beta}})^{-1}} =  \frac{1}{2}  \tr{ \tilde{\mbf{S}}  \mbf{K}_{\tilde{X}}^{-1}} - \frac{\abs{\tilde{X}}}{2}
\end{align*}
\begin{align*}
\frac{1}{2} \log |\mbf{I}  + \mbf{K}_{{\beta}} \mbf{B}|  
= -\log | \mbf{K}_{\tilde{X}}^{-1} \tilde{\mbf{S}}  |.  
\end{align*}
Therefore, the KL divergence term for the hybrid parametrization can be written as 
\begin{align}
\KL{q}{p} &= \frac{1}{2} \tilde{\mbf{m}}^T \mbf{K}_{\tilde{X}}^{-1} \tilde{\mbf{m}} + \tilde{\mbf{m}}^T  \mbf{K}_{\tilde{X}}^{-1} \mbf{K}_{\tilde{X}, \alpha_r} \mbf{a}_r + \frac{1}{2} \mbf{a}_r^T \mbf{K}_{\alpha_r} \mbf{a}_r  \nonumber \\
&\quad -\log | \mbf{K}_{\tilde{X}}^{-1} \tilde{\mbf{S}}  | + \frac{1}{2}  \tr{ \tilde{\mbf{S}}  \mbf{K}_{\tilde{X}}^{-1}} - \frac{\abs{\tilde{X}}}{2}  \label{eq:KL hybrid}
\end{align}
This KL divergence terms is exactly as the one used by~\citet{hensman2015scalable}, when $\tilde{\mbf{a}}_r = 0$. That is, the hybrid parametrization is a strict generalization of the canonical parametrization. Note in here $\tilde{\mbf{S}}$ is initialized as $\tilde{\mbf{S}} = \tilde{\mbf{K}}_{\tilde{X}}= \mbf{L}\mbf{L}^T$ and then its Cholesky factor $\mbf{L}$ is optimized afterwards. 

From~\eqref{eq:mean hybrid}, \eqref{eq:cov hybrid}, and~\eqref{eq:KL hybrid}, we can see the linear complexity of the decoupled model is preserved. The stochastic gradient can  be computed in linear time by performing sampling of the residual inducing points $\alpha_r$.

\section{Variational Inference with Decoupled Gaussian Processes} \label{app:VIDGP}

Here we provide the details of the variational inference problem used to learn \ourModel{}s:
\begin{align}
\max_{q(f),\theta}  \LL_\theta(q(f)& = 	\max_{q(f),\theta}  \int  q(f) \log \frac{p_\theta(y|f)p(f)} {q(f)}  \der{f} =  \max_{q(f),\theta}\E_{q}[ \log  p_\theta(y|f) ] - \KL{q}{p},
\end{align}

\subsection{KL Divergence}
\subsubsection{Evaluation}

First, we show how to evaluate the KL-divergence. We do so by extending the KL-divergence between two finite-dimensional subspace-parametrized Gaussian measures to infinite dimensional space and show that it is well-defined.

Recall for two $d$-dimensional Gaussian distributions $q(f) = \NN(f|\mu, \Sigma)$ and $p = \NN(f|\bar{\mu}, \bar{\Sigma})$, the KL-divergence is given as 
\begin{proposition}
\begin{align*}
	\KL{q}{p} &:= \int  \log \frac{q(f)}{p(f)} \der \mu_q(f) 
	=\int q(f) \log \frac{q(f)}{p(f)} \der f \\
	&= \frac{1}{2}\left( \tr{ \bar\Sigma^{-1} {\Sigma}  } + ({\mu} - \bar\mu)^T \bar\Sigma^{-1}({\mu} - \bar\mu)  + \ln \frac{ |\bar\Sigma| }{ |\Sigma|  }   -d \right) 
\end{align*} 
\label{th:KL}
\end{proposition}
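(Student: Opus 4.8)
The plan is to carry out the standard direct computation: write both Gaussian densities explicitly, form the log-ratio, and integrate against $q$, using only elementary facts about the first two moments of a multivariate Gaussian. Since $q(f)=\NN(f|\mu,\Sigma)$ and $p(f)=\NN(f|\bar\mu,\bar\Sigma)$ share the normalizing constant $(2\pi)^{d/2}$, that factor cancels in the ratio, so the first step is to write $\log\frac{q(f)}{p(f)} = \frac{1}{2}\ln\frac{|\bar\Sigma|}{|\Sigma|} - \frac{1}{2}(f-\mu)^T\Sigma^{-1}(f-\mu) + \frac{1}{2}(f-\bar\mu)^T\bar\Sigma^{-1}(f-\bar\mu)$. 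Taking $\E_q[\cdot]$ termwise, the log-determinant term is constant and passes through unchanged.

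For the two quadratic expectations I would invoke the identity $\E_q[(f-a)^T M (f-a)] = \tr{M\Sigma} + (\mu-a)^T M (\mu-a)$, valid for any fixed vector $a$ and matrix $M$. This follows by writing $(f-a)^T M (f-a) = \tr{M(f-a)(f-a)^T}$, using linearity of trace and expectation, and the moment formula $\E_q[(f-a)(f-a)^T] = \Sigma + (\mu-a)(\mu-a)^T$. Applying it with $a=\mu$, $M=\Sigma^{-1}$ gives $\tr{\Sigma^{-1}\Sigma}=d$; applying it with $a=\bar\mu$, $M=\bar\Sigma^{-1}$ gives $\tr{\bar\Sigma^{-1}\Sigma} + (\mu-\bar\mu)^T\bar\Sigma^{-1}(\mu-\bar\mu)$. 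Collecting the three contributions together with the overall factor $\frac{1}{2}$ yields the claimed expression.

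There is no genuine obstacle here; this is a textbook computation, included for completeness, whose only role is to supply the finite-dimensional base case that the subsequent development lifts to the infinite-dimensional Gaussian-measure setting through the subspace parametrization of $\tilde\mu$ and $\tilde\Sigma$. The single point that warrants a moment of care is the cross term arising when one expands $(f-\bar\mu)^T\bar\Sigma^{-1}(f-\bar\mu)$ via $f-\bar\mu = (f-\mu)+(\mu-\bar\mu)$: the term linear in $f-\mu$ vanishes because $\E_q[f-\mu]=0$, and it is precisely this cancellation that decouples the trace contribution from the Mahalanobis-distance contribution in the final formula.
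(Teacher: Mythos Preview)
Your proof is correct and is the standard textbook derivation. The paper does not actually prove this proposition: it is introduced with the phrase ``Recall for two $d$-dimensional Gaussian distributions\ldots'' and the formula is simply stated as a known fact, serving only as the finite-dimensional starting point for Theorem~1.
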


Now consider $q$ and $p$ are subspace parametrized as 
\begin{align}
\begin{matrix}
p(f) = \NN(f| \bar\mu, \bar\Sigma)  = \NN(f|  \Psi_{\bar\alpha} \bar{\mbf{a}} , (I + \Psi_{\bar\beta} \bar{\mbf{B}} \Psi_{\bar\beta}^T)^{-1}  ) \\
q(f) = \NN({f|\mu}, {\Sigma})= \NN(f|  \Psi_{\alpha} {\mbf{a}} , (I + \Psi_{\beta} {\mbf{B}} \Psi_{\beta}^T)^{-1}  ).
\end{matrix} \label{eq:subspace param general}
\end{align}

By Proposition~\ref{th:KL}, we derive the representation of KL-divergence which is applicable even when $d$ is infinite. Recall in the infinite dimensional case, $\mu$, $\Sigma$, $\bar{\mu}$, and $\bar{\Sigma}$ are objects in the RKHS $\HH$ (Cameron-Martin space).
\begin{theorem}
Assume $q$ and $p$ are two subspace parametrized Gaussian measures given as~\eqref{eq:subspace param general}. Regardless of the dimension of $\HH$, the following holds 
\begin{align}
\KL{q}{p}&= \frac{-1}{2}  \tr{\left( \mbf{K}_{\beta} + \mbf{K}_{{\beta}, \bar\beta } \bar{\mbf{B}} \mbf{K}_{\bar\beta, {\beta} } \right)  ( {\mbf{B}}^{-1}  + \mbf{K}_{{\beta}})^{-1}     } 
 + \frac{1}{2} \log |\mbf{I}  + \mbf{K}_{{\beta}} {\mbf{B}}| \nonumber \\
&\quad + \frac{1}{2} {\mbf{a}}^T \left( \mbf{K}_{{\alpha}} + \mbf{K}_{{\alpha},\bar\beta} \bar{\mbf{B}}  \mbf{K}_{\bar\beta, {\alpha}}  \right)  {\mbf{a}} 
 - {\mbf{a}}^T   \left(  \mbf{K}_{{\alpha}, \bar\alpha} +  \mbf{K}_{ {\alpha}, \bar\beta} \bar{\mbf{B}}  \mbf{K}_{\bar\beta, \bar\alpha}  \right)   \bar{\mbf{a}} + C
 \label{eq:KL general}
\end{align}
where 
\begin{align*}
C = \frac{1}{2}\left( \tr{\mbf{K}_{\bar{\beta}} \bar{\mbf{B}} } - \log |\mbf{I}  + \mbf{K}_{\bar\beta} \bar{\mbf{B}}|  +  \bar{\mbf{a}}^T \left( \mbf{K}_{{\bar\alpha}} + \mbf{K}_{{\bar\alpha},\bar\beta} \mbf{\bar{B}}  \mbf{K}_{\bar\beta,{\bar\alpha}}  \right)  \bar{\mbf{a}}  \right)
\end{align*}
In particular, if $p$ is normal (i.e. $p(f) =\NN(f|0,I)$), then 
\begin{align*}
\KL{q}{p} &= \frac{1}{2} \mbf{a}^T \mbf{K}_{{\alpha}}  \mbf{a} 
+ \frac{1}{2} \log |\mbf{I}  + \mbf{K}_{{\beta}} \mbf{B}|  + \frac{-1}{2}  \tr{ \mbf{K}_{\beta}(  \mbf{B}^{-1}  + \mbf{K}_{{\beta}})^{-1}}  
\end{align*}

\end{theorem}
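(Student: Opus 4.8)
The plan is to reduce the infinite-dimensional statement to the finite-dimensional formula of Proposition~\ref{th:KL} by observing that all the relevant action happens inside a finite-dimensional subspace of $\HH$. First I would let $\Psi$ denote a linear operator whose columns form an orthonormal basis for the (finite-dimensional) span of the ranges of $\Psi_\alpha$, $\Psi_\beta$, $\Psi_{\bar\alpha}$, $\Psi_{\bar\beta}$, so that $\Psi^T\Psi = \mbf{I}$ and each $\Psi_\bullet = \Psi \mbf{C}_\bullet$ for some coordinate matrix $\mbf{C}_\bullet$. On the orthogonal complement of this subspace, both $q$ and $p$ restrict to the same normal measure $\NN(0,I)$ (since $\tilde\mu$ and $\tilde\mu$ live in the subspace and the covariance operators $I + \Psi_\bullet \mbf{B}_\bullet \Psi_\bullet^T$ act as the identity there), so the Radon--Nikodym derivative $\der q/\der p$ depends only on the projection onto the subspace, and the KL-divergence equals the KL-divergence of the two finite-dimensional marginals. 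This is the step that makes the formally infinite-dimensional $-d$ term in Proposition~\ref{th:KL} harmless: it gets cancelled against the $\log|\bar\Sigma|/|\Sigma|$ contribution from the complement, leaving only the subspace dimension.

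Next I would apply Proposition~\ref{th:KL} to the finite-dimensional marginals. The marginal means are $\mbf{C}_\alpha \mbf{a}$ and $\mbf{C}_{\bar\alpha}\bar{\mbf{a}}$, and the marginal covariances are $(\mbf{I} + \mbf{C}_\beta \mbf{B}\mbf{C}_\beta^T)^{-1}$ and $(\mbf{I} + \mbf{C}_{\bar\beta}\bar{\mbf{B}}\mbf{C}_{\bar\beta}^T)^{-1}$ (using $\Psi^T\Psi = \mbf{I}$). Then I would grind the three terms of the proposition using the Woodbury identity and the Weinstein--Aronszajn determinant identity to turn each $(\mbf{I} + \mbf{C}_\bullet \mbf{B}_\bullet \mbf{C}_\bullet^T)^{\pm 1}$ into something expressed via the small matrices $\mbf{C}_\bullet^T \mbf{C}_{\bullet'}$. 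The crucial bookkeeping observation is that $\mbf{C}_\bullet^T \mbf{C}_{\bullet'} = \Psi_\bullet^T \Psi_{\bullet'}$, and by the definition of $\Psi_{\tilde X}$ (and its analogues) from Section~\ref{sec:Variational Inference of Gaussian Measures}, these inner products are exactly the kernel Gram matrices $\mbf{K}_{\bullet,\bullet'}$ appearing on the right-hand side. So after substitution the orthonormalization operator $\Psi$ disappears entirely and we are left with an expression purely in terms of $\mbf{K}_{\alpha}, \mbf{K}_{\beta}, \mbf{K}_{\alpha,\bar\beta}, \ldots, \mbf{a}, \bar{\mbf{a}}, \mbf{B}, \bar{\mbf{B}}$.

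After that it is a matter of collecting terms: the trace term of Proposition~\ref{th:KL} expands into the first line of~\eqref{eq:KL general} plus the $\tr{\mbf{K}_{\bar\beta}\bar{\mbf{B}}}$ piece of $C$; the quadratic-form term splits into the $\frac12 \mbf{a}^T(\cdots)\mbf{a}$, the cross term $-\mbf{a}^T(\cdots)\bar{\mbf{a}}$, and the $\bar{\mbf{a}}^T(\cdots)\bar{\mbf{a}}$ piece of $C$; and the log-determinant ratio splits into $\frac12 \log|\mbf{I} + \mbf{K}_\beta \mbf{B}|$ and the $-\frac12 \log|\mbf{I} + \mbf{K}_{\bar\beta}\bar{\mbf{B}}|$ piece of $C$. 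The normal special case $p = \NN(f|0,I)$ then follows immediately by setting $\bar{\mbf{a}} = 0$ and $\bar{\mbf{B}} = 0$, which kills $C$ and the cross term and leaves the stated three-term formula. I expect the main obstacle to be the first paragraph rather than the algebra: one must argue carefully that the KL-divergence between Gaussian measures genuinely depends only on the finite-dimensional marginal onto the subspace spanned by the parametrizing directions (equivalently, that the Radon--Nikodym derivative factors through that projection) so that the $-d$ in the finite formula can be legitimately replaced by the subspace dimension and the infinite-dimensional $\log|\bar\Sigma|/|\Sigma|$ is finite. Once that reduction is justified, everything else is routine Woodbury/determinant manipulation.
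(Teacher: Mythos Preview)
Your proposal is correct and follows essentially the same route as the paper: both proofs decompose the finite-dimensional KL formula of Proposition~\ref{th:KL} into its trace, log-determinant, and quadratic parts, apply Woodbury to write $\Sigma = I - \Psi_\beta(\mbf{B}^{-1}+\mbf{K}_\beta)^{-1}\Psi_\beta^T$ and the Weinstein--Aronszajn identity to the determinant, and then read off the answer using $\Psi_\bullet^T\Psi_{\bullet'} = \mbf{K}_{\bullet,\bullet'}$. The one difference is that the paper treats the infinite-dimensional case somewhat formally (it simply observes that $\tr{I}-d$ cancels and remarks ``this term does not depend on the ambient dimension''), whereas you explicitly pass to an orthonormal basis for the finite-dimensional span of all four $\Psi_\bullet$ and argue that the Radon--Nikodym derivative factors through that projection; your reduction is more rigorous but leads to exactly the same algebra.
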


\begin{proof} 
\mbox{}\\
To prove, we derive each term in~\eqref{eq:KL general} as follows. 

First, we derive $\tr{ \bar{\Sigma}^{-1} {\Sigma}} - d$. 	
Define $ \mbf{R} =( \mbf{B}^{-1}  + \mbf{K}_\beta)^{-1}  $. Then we can write
\begin{align}
\Sigma &= (I + \Psi_\beta \mbf{B} \Psi_\beta^T)^{-1} 
= I - \Psi_\beta ( \mbf{B}^{-1}  + \Psi_\beta^T \Psi_\beta)^{-1} \Psi_\beta^T 
= I - \Psi_\beta \mbf{R} \Psi_\beta^T.
\label{eq:Sigma}
\end{align}
Using~\eqref{eq:Sigma}, we can derive
\begin{align*}
\bar{\Sigma}^{-1} {\Sigma} &= (I + \Psi_{\bar{\beta}} \bar{\mbf{B}} \Psi_{\bar{\beta}}^T) \left( I - \Psi_{\beta} \mbf{R} \Psi_{\beta}^T  \right) 
= I + \Psi_{\bar{\beta}} \bar{\mbf{B}} \Psi_{\bar{\beta}}^T - \Psi_{\beta} \mbf{R} \Psi_{\beta}^T - \Psi_{\bar\beta} \mbf{\bar{B}} \mbf{K}_{\bar\beta, {\beta}} \mbf{R} \Psi_{\beta}^T 
\end{align*}
and therefore
\begin{align*}
\tr{\bar{\Sigma}^{-1} {\Sigma}} -d &= \tr{I}- d + \tr{\mbf{K}_{\bar{\beta}} \bar{\mbf{B}} } 
- \tr{\mbf{R} \left( \mbf{K}_{\beta} + \mbf{K}_{{\beta}, \bar\beta } \bar{\mbf{B}}  \mbf{K}_{\bar\beta, {\beta} } \right) }\\
&= \tr{\mbf{K}_{\bar{\beta}} \bar{\mbf{B}} } 
- \tr{\mbf{R} \left( \mbf{K}_{\beta} + \mbf{K}_{{\beta}, \bar\beta } \bar{\mbf{B}}  \mbf{K}_{\bar\beta, {\beta} } \right) }
\end{align*}
Note this term does not depend on the ambient dimension.

Second, we derive $\log  ( |\bar\Sigma| /|{\Sigma}|) $:
Since 
\begin{align*}
\log |{\Sigma}^{-1}| 
= \log  |\mbf{B}^{-1}  + \mbf{K}_{\beta} | | \mbf{B}|   
=  \log |\mbf{I}  + \mbf{K}_{\beta} \mbf{B}|.
\end{align*}
it holds that 
\begin{align*}
\log \frac{  |\bar\Sigma| }{ |{\Sigma}|} =  \log |\mbf{I}  + \mbf{K}_{\beta} \mbf{B}| - \log |\mbf{I}  + \mbf{K}_{\bar\beta} \bar{\mbf{B}}|.
\end{align*}

Finally, we  derive the quadratic term:
\begin{align*}
&({\mu} - \bar\mu)^T \bar\Sigma^{-1}({\mu} -\bar\mu) \\
&= {\mu}^T \bar\Sigma^{-1} {\mu} - 2 \bar{\mu}^T \bar\Sigma^{-1} \mu 
 +  \bar\mu \bar\Sigma^{-1} \bar\mu 
\\
&= {\mbf{a} }^T  \Psi_{\alpha}^T \left(  I + \Psi_{\bar\beta} \mbf{ \bar{B} }\Psi_{\bar{\beta}}^T  \right)  \Psi_{\alpha} {\mbf{a}} 
-  2{\mbf{\bar{a}}}^T  \Psi_{\bar\alpha}^T \left(  I + \Psi_{\bar\beta} \bar{\mbf{B}}\Psi_{\bar\beta}^T  \right)  \Psi_{\alpha} {\mbf{a}} 
+ {\bar{\mbf{a}} }^T  \Psi_{\bar\alpha}^T \left(  I + \Psi_{\bar\beta} \mbf{ \bar{B} }\Psi_{\bar{\beta}}^T  \right)  \Psi_{\bar\alpha} \bar5{\mbf{a}} 
\\
&= {\mbf{a}}^T \left( \mbf{K}_{{\alpha}} + \mbf{K}_{{\alpha},\bar\beta} \mbf{\bar{B}}  \mbf{K}_{\bar\beta,{\alpha}}  \right)  {\mbf{a}} 
- 2{\mbf{\bar{a}}}^T   \left(  \mbf{K}_{\bar\alpha, {\alpha}} +  \mbf{K}_{\bar\alpha,\bar\beta} \mbf{\bar{B}}\mbf{K}_{\bar\beta, {\alpha}}  \right)   {\mbf{a}}
 +  \bar{\mbf{a}}^T \left( \mbf{K}_{{\bar\alpha}} + \mbf{K}_{{\bar\alpha},\bar\beta} \mbf{\bar{B}}  \mbf{K}_{\bar\beta,{\bar\alpha}}  \right)  \bar{\mbf{a}} 
\end{align*}
\end{proof}

\paragraph{Remarks}
 The above expression is well defined even when $\mbf{B}\succeq 0$, because $( {\mbf{B}}^{-1}  + \mbf{K}_{{\beta}})^{-1} =\mbf{B} ( \mbf{I}  + \mbf{K}_{{\beta}} \mbf{B})^{-1}$. Particularly,  we can parametrize  $\mbf{B} = \mbf{L}\mbf{L}^T $ with Cholesky factor $\mbf{L}  \in \R^{M_\beta \times M_\beta} $ in practice so the problem is unconstrained. The required terms can be stably computed: $ \left( \mbf{B}^{-1} + \mbf{K}_{\beta}  \right)^{-1} = \mbf{L} \mbf{H}^{-1} \mbf{L}^T  $ and $\log |\mbf{I}  + \mbf{K}_{{\beta}} \mbf{B}| = \log |\mbf{H}| $, where  $\mbf{H} = \mbf{I} + \mbf{L}^T \mbf{K}_{\beta}\mbf{L} $.

\subsubsection{Gradients}
Here we derive the equations of the gradient of the variational inference problem of \ourMethod. The purpose here is to show the complexity of calculating the gradients. These equations are useful in implementing \ourMethod~using basic linear algebra routines, while computational-graph libraries based on automatic differentiation are also applicable and easier to apply.

To derive the gradients, we first introduce some short-hand
\begin{align*}
\mbf{G}_{\alpha} &=  \mbf{K}_{{\alpha}} + \mbf{K}_{{\alpha},\bar\beta} \bar{\mbf{B}} \mbf{K}_{\bar\beta, {\alpha} } \\
\mbf{G}_{\alpha, \bar{\alpha}} &= \mbf{K}_{{\alpha}, \bar\alpha} +  \mbf{K}_{ {\alpha}, \bar\beta} \bar{\mbf{B}}  \mbf{K}_{\bar\beta, \bar\alpha}   \\
\mbf{G}_{\beta} &= \mbf{K}_{\beta} + \mbf{K}_{{\beta}, \bar\beta } \bar{\mbf{B}} \mbf{K}_{\bar\beta, {\beta} }
\end{align*}
and write $\KL{q}{p}$ as
\begin{align*}
\KL{q}{p}= \frac{-1}{2}  \tr{ \mbf{G}_{\beta}( {\mbf{B}}^{-1}  + \mbf{K}_{{\beta}})^{-1}     } 
+ \frac{1}{2} \log | \mbf{I}  + \mbf{K}_{{\beta}} {\mbf{B}}|
+ \frac{1}{2} {\mbf{a}}^T \mbf{G}_{\alpha}  {\mbf{a}} 
- {\mbf{a}}^T   \mbf{G}_{\alpha, \bar{\alpha}}   \bar{\mbf{a}}.
\end{align*}

We then give the equations to compute the derivatives below.
For compactness of notation, we use $\odot$ to denote element-wise product and  use $\mbf{1}$ to denote the vector of ones. In addition, we  introduce a linear operator $\diag$  with overloaded definitions:
\begin{enumerate}
\item $\diag: \R^N \to \R^{N\times N}$ which constructs a diagonal matrix from a vector 
\item  $\diag: \R^{N\times N} \to \R^N$ which extracts the diagonal elements of a matrix to a vector.
\end{enumerate}

\begin{proposition} \label{th:KL derivative}
The gradients of $\KL{q}{p}$ is as follows:
\begin{align*}
\nabla_{\mbf{a}}\KL{q}{p} &=  \mbf{G}_{\alpha} \mbf{a} - \mbf{G}_{\alpha,\bar{\alpha}}   \bar{\mbf{a}}\\
\nabla_{\alpha}\KL{q}{p} &= 
\diag(\mbf{a}) \left( \partial_{\alpha} \mbf{G}_{\alpha} \mbf{a} -\partial_{\alpha}  \mbf{G}_{\alpha,\bar{\alpha}}   \bar{\mbf{a}}   \right) \\
\nabla_{\mbf{B}}\KL{q}{p}
&=  \frac{1}{2} ( \mbf{I}  + \mbf{K}_{{\beta}} \mbf{B})^{-1} \left( \mbf{K}_{{\beta}} \mbf{B} \mbf{K}_{{\beta}} - \mbf{\Delta}_\beta  \right) ( \mbf{I}  + \mbf{B} \mbf{K}_{{\beta}} )^{-1}\\
\nabla_{\beta}\KL{q}{p} 
&= \left( \partial_{\beta} \mbf{K}_\beta \odot  ( {\mbf{B}}^{-1}  + \mbf{K}_{{\beta}})^{-1} \mbf{G}_{\beta} ( {\mbf{B}}^{-1}  + \mbf{K}_{{\beta}})^{-1} \right) \mbf{1}
- \left(  \partial_{\beta} \mbf{\Delta}_\beta \odot ( \mbf{B}^{-1}  + \mbf{K}_{{\beta}} )^{-1}   \right) \mbf{1}
\end{align*}
where $\mbf{\Delta}_\beta = \mbf{G}_\beta - \mbf{K}_{\beta}$ and $\partial$ is  defined as the partial derivative with respect to the left argument.\footnote{The additional factor of 2 is due to $\mbf{K}_\beta$ is symmetric.}  In particular, if the $p$ is normal,
\begin{align*}
\nabla_{\mbf{a}}\KL{q}{p} &=  \mbf{K}_{\alpha} \mbf{a} \\
\nabla_{\alpha}\KL{q}{p} &= 
\diag(\mbf{a})  \partial_{\alpha} \mbf{K}_{\alpha} \mbf{a} \\
\nabla_{\mbf{B}}\KL{q}{p}
&=  \frac{1}{2} ( \mbf{I}  + \mbf{K}_{{\beta}} \mbf{B})^{-1}  \mbf{K}_{{\beta}} \mbf{B} \mbf{K}_{{\beta}}  ( \mbf{I}  + \mbf{B} \mbf{K}_{{\beta}} )^{-1}\\
\nabla_{\beta}\KL{q}{p} 
&= \left( \partial_{\beta} \mbf{K}_\beta \odot  ( {\mbf{B}}^{-1}  + \mbf{K}_{{\beta}})^{-1} \mbf{K}_{\beta} ( {\mbf{B}}^{-1}  + \mbf{K}_{{\beta}})^{-1} \right) \mbf{1}
\end{align*}
\end{proposition}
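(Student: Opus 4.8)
The plan is to differentiate the closed-form expression for $\KL{q}{p}$ derived in the preceding theorem, treating the four blocks of parameters $\mbf{a}$, $\alpha$, $\mbf{B}$, $\beta$ one at a time and using standard matrix-calculus identities. Throughout I would keep the abbreviations $\mbf{G}_\alpha$, $\mbf{G}_{\alpha,\bar\alpha}$, $\mbf{G}_\beta$, $\mbf{R} = (\mbf{B}^{-1}+\mbf{K}_\beta)^{-1}$, and $\mbf{\Delta}_\beta = \mbf{G}_\beta - \mbf{K}_\beta$ introduced just above, so that only terms explicitly containing the differentiation variable need to be tracked. Note $\mbf{G}_\alpha$, $\mbf{G}_{\alpha,\bar\alpha}$ depend on $\mbf{a}$ only through multiplication, so the $\mbf{a}$-derivative is immediate from the quadratic $\frac{1}{2}\mbf{a}^T\mbf{G}_\alpha\mbf{a} - \mbf{a}^T\mbf{G}_{\alpha,\bar\alpha}\bar{\mbf{a}}$, giving $\mbf{G}_\alpha\mbf{a} - \mbf{G}_{\alpha,\bar\alpha}\bar{\mbf{a}}$ (using symmetry of $\mbf{G}_\alpha$); the normal-prior case follows by setting $\bar{\mbf{B}}=0$, $\bar{\mbf{a}}$-terms to zero, so $\mbf{G}_\alpha = \mbf{K}_\alpha$.

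For the $\alpha$-derivative, I would observe that $\alpha$ enters only through the kernel blocks $\mbf{K}_\alpha$, $\mbf{K}_{\alpha,\bar\beta}$, $\mbf{K}_{\alpha,\bar\alpha}$ appearing in $\mbf{G}_\alpha$ and $\mbf{G}_{\alpha,\bar\alpha}$. Writing $\alpha = (\tilde x_1,\dots,\tilde x_{M_\alpha})$, each row of these matrices depends on exactly one $\tilde x_i$, which is the source of the $\diag(\mbf{a})$ prefactor: differentiating $\frac12\mbf{a}^T\mbf{G}_\alpha\mbf{a}$ with respect to the $i$-th basis location produces $a_i$ times the $i$-th row contribution $\partial_\alpha \mbf{G}_\alpha\,\mbf{a}$, and similarly for the cross term. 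Here $\partial_\alpha$ denotes the derivative with respect to the left kernel argument only, and the factor $2$ mentioned in the footnote compensates for the symmetric appearance of $\alpha$ in $\mbf{G}_\alpha = \mbf{K}_\alpha + \dots$. Collecting gives $\diag(\mbf{a})(\partial_\alpha\mbf{G}_\alpha\,\mbf{a} - \partial_\alpha\mbf{G}_{\alpha,\bar\alpha}\,\bar{\mbf{a}})$, and the normal case again just drops $\mbf{G}_{\alpha,\bar\alpha}$.

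The $\mbf{B}$- and $\beta$-derivatives are the main obstacle, because $\mbf{B}$ appears both inside $\log|\mbf{I}+\mbf{K}_\beta\mbf{B}|$ and inside $\mbf{R} = (\mbf{B}^{-1}+\mbf{K}_\beta)^{-1}$, and one must avoid differentiating $\mbf{B}^{-1}$ directly (which would be ill-defined when $\mbf{B}\succeq 0$ is singular). The trick is to rewrite $\mbf{R} = \mbf{B}(\mbf{I}+\mbf{K}_\beta\mbf{B})^{-1}$ so everything is polynomial/rational in $\mbf{B}$; then $d\log|\mbf{I}+\mbf{K}_\beta\mbf{B}| = \tr{(\mbf{I}+\mbf{K}_\beta\mbf{B})^{-1}\mbf{K}_\beta\,d\mbf{B}}$ and $d\,\tr{\mbf{G}_\beta\mbf{R}} = \tr{\mbf{G}_\beta\,(d\mbf{R})}$ with $d\mbf{R}$ expanded via the product and inverse rules. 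After combining the two contributions one has to simplify the resulting expression into the symmetric sandwiched form $\frac12(\mbf{I}+\mbf{K}_\beta\mbf{B})^{-1}(\mbf{K}_\beta\mbf{B}\mbf{K}_\beta - \mbf{\Delta}_\beta)(\mbf{I}+\mbf{B}\mbf{K}_\beta)^{-1}$; recognizing that the $\log$-term contributes the $\mbf{K}_\beta\mbf{B}\mbf{K}_\beta$ part and the trace-term contributes $-\mbf{\Delta}_\beta$ after cancellation of the $\mbf{K}_\beta\mbf{B}\mbf{K}_\beta$ piece coming from the $\mbf{G}_\beta = \mbf{K}_\beta + \mbf{\Delta}_\beta$ split is the delicate bookkeeping step. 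For $\beta$, as with $\alpha$, each basis location $\tilde x_j$ affects only one row/column of $\mbf{K}_\beta$ (hence the $\odot$ with the appropriate matrix and the contraction against $\mbf{1}$), but now one must track $\beta$ through both $\mbf{K}_\beta$ in $\mbf{R}$ and $\mbf{K}_{\beta,\bar\beta}$ in $\mbf{\Delta}_\beta$; differentiating $\tr{\mbf{G}_\beta\mbf{R}}$ and grouping the terms that hit $\mbf{K}_\beta$ versus those that hit $\mbf{\Delta}_\beta$ yields the two $\odot$-terms. The normal-prior specialization ($\bar{\mbf{B}}=0$) collapses $\mbf{G}_\beta$ to $\mbf{K}_\beta$ and $\mbf{\Delta}_\beta$ to $0$, recovering the stated simplified formulas and, via $\mbf{R} = \mbf{B}(\mbf{I}+\mbf{K}_\beta\mbf{B})^{-1}$, matching the $\nabla_{\mbf{B}}\KL{q}{p}$ expression quoted earlier in the paper.
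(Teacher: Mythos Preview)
Your proposal is correct and matches the paper's approach: the paper itself states that the derivation is ``simply mechanical'' and omits it entirely, so your plan---differentiating the closed-form $\KL{q}{p}$ term-by-term using standard matrix-calculus identities, with the $\mbf{R}=\mbf{B}(\mbf{I}+\mbf{K}_\beta\mbf{B})^{-1}$ rewrite to avoid $\mbf{B}^{-1}$---is exactly the kind of computation the authors have in mind. You have in fact supplied more detail than the paper does.
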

The derivation of Proposition~\ref{th:KL derivative} is simply mechanical, so we omit it here.

Here we only show the derivative with respect to $\mbf{B}$. 
Suppose $\mbf{B} = \mbf{L}\mbf{L}^T$. Then one can apply the chain rule and get 
\begin{align*}
\nabla_{\mbf{L}}\KL{q}{p} =    2 \nabla_{B}\KL{q}{p} \mbf{L}.
\end{align*}

\subsection{Expected Log-Likelihood}

\subsubsection{Evaluation}
The evaluation of the expected log-likelihood depends on the mean and covariance in~\eqref{eq:decoupled GP} , which we repeat here
\begin{align*}
\hat{m}_{|\mbf{y}}^{\alpha}(x) =  \mbf{k}_{x, \alpha} \bm{a}, \qquad \hat{k}_{|\mbf{y}}^{\beta}(x,x') =  k_{x, x'}   -  \mbf{k}_{x,\beta}  \left( \mbf{B}^{-1} + \mbf{K}_{\beta}  \right)^{-1}  \mbf{k}_{\beta,x'}.
\end{align*}
Its derivation is trivial by the definition of $q$ in~\eqref{eq:subspace param general} and~\eqref{eq:Sigma}. For $N$ observations, the vector form $\hat{\mbf{m}}\in \R^N$  and $\hat{\mbf{s}} \in \R^N$ of the mean and the covariance above evaluated on each  observation can be computed in $O(N)$ as
\begin{align*}
	\hat{\mbf{m}} &= \mbf{K}_{X,\alpha} \mbf{a}    \\
	\hat{\mbf{s}} &= \diag\left( \mbf{K}_X - \mbf{K}_{X,\beta}(\mbf{B}^{-1} + \mbf{K}_{\beta} )^{-1} \mbf{K}_{\beta,X} \right) \\
	&= \diag(\mbf{K}_X) - \left( \mbf{K}_{X,\beta}\odot(  \mbf{K}_{X,\beta} (\mbf{B}^{-1} + \mbf{K}_{\beta} )^{-1}) \right)\mbf{1} \\
	&= \diag(\mbf{K}_X) - \left( \mbf{K}_{X,\beta}\odot(  \mbf{K}_{X,\beta} \mbf{B}( \mbf{I} + \mbf{K}_{\beta}\mbf{B} )^{-1}) \right)\mbf{1}.
\end{align*}
Given $\hat{\mbf{m}}$ and $\hat{\mbf{s}}$, the expected log-likelihood can be evaluated either in closed-form  for  Gaussian likelihood or by sampling for general likelihoods.

\subsubsection{Gradients}
The computation of the gradients of the expected log-likelihood can be completed in two steps. First, we compute the gradients of $\E_{q}[ \log  p_\theta(y|f) ]$ with respect to $(\theta, \hat{\mbf{m}}, \hat{\mbf{s}})$ (i.e. $\nabla_{\hat{\mbf{m}}}e$,  $\nabla_{\hat{\mbf{s}}}e$,  and $\nabla_{\hat{\mbf{\theta}}}e$ ). 
Because $\log  p_\theta(y|f)$ is the sum of $N$ terms, this step can be done in $O(N)$: for each observation $x$, let $q(f(x)) = \NN(f(x)|\hat{m},\hat{s})$ be a scalar Gaussian; under standard regularity conditions, we have
\begin{align*}
	\nabla_{\hat{m}} \E_{q} [ \log p_\theta(y|f(x))  ] &= \E_{q} [ \nabla_{\hat{m}} \log q(f(x)) \log p_\theta(y|f(x))  ] \\
	\nabla_{\hat{s}} \E_{q} [ \log p_\theta(y|f(x))  ] &= \E_{q} [ \nabla_{\hat{s}} \log q(f(x)) \log p_\theta(y|f(x))  ] \\
	\nabla_{\theta} \E_{q} [ \log p_\theta(y|f(x))  ] &= \E_{q} [ \nabla_{\theta}  \log p_\theta(y|f(x))  ]	
\end{align*}
where $\nabla_{\theta}  \log p_\theta(y|f(x)) $ can be found, for example, in~\cite{sheth2015sparse}. The above can be calculated in closed-form  for  Gaussian likelihood or by sampling for general likelihoods.

 Next we propagate these gradients by chain rule. The results are summarized below. 
\begin{proposition} \label{th:expected log-likeli derivative}
Let $e = \E_{q}[ \log  p_\theta(y|f) ]$. Suppose $k(x,x') = \rho^2 g_s(x,x')$ for some hyper-parameters $\rho,s\in\R$.
The gradients of e are as follows: 
\begin{align*}
\nabla_a e  &= \mbf{K}_{X, \alpha}^T \nabla_{\hat{\mbf{m}} } e\\
\nabla_{\alpha} e  &= \diag(\mbf{a})\partial \mbf{K}_{X, \alpha}^T \nabla_{\hat{\mbf{m}} } e \\
\nabla_{\mbf{B}} e
&= - (\mbf{I} +  \mbf{K}_{\beta} \mbf{B} )^{-1}  \mbf{K}_{X,\beta}^T \diag( \nabla_{\hat{\mbf{s}}} e)  \mbf{K}_{X,\beta} (\mbf{I} + \mbf{B} \mbf{K}_{\beta}  )^{-1}
\\
\nabla_{\beta} \hat{e} 
&= 2 (\partial \mbf{K}_{\beta}^T \odot ( \mbf{\Omega}  \diag(\nabla_{\hat{\mbf{s}}} e)\mbf{\Omega}^T  ) )\bm{1}  - 2  (  \mbf{\Omega} \odot \partial \mbf{K}_{\beta, X})\nabla_{\hat{\mbf{s}}} e\\
\nabla_{\log\rho} e &= \hat{\mbf{m}}^T  \nabla_{\hat{\mbf{m}} } e+ 2 \hat{\mbf{s}}^T \nabla_{\hat{{\mbf{s}}}} e \\
\nabla_{s} e &= (\partial_s \mbf{K}_{X,\alpha} \mbf{a} )^T \nabla_{\hat{\mbf{m}} } e 
-  2\bm{1}^T \left( \mbf{\Omega} \odot  \partial_s \mbf{K}_{\beta, X}    \right) \nabla_{\hat{{\mbf{s}}}} e
\end{align*}
where $\mbf{\Omega} = \mbf{B} ( \mbf{I} +  \mbf{K}_{\beta} \mbf{B} )^{-1}  \mbf{K}_{\beta,X}$.
\end{proposition}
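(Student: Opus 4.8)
The plan is to treat $e=\E_{q}[\log p_\theta(y|f)]$ as a composition. By~\eqref{eq:expected log-likeli} and the fact that each $q(f(x_n))=\NN(f(x_n)\mid \hat m_n,\hat s_n)$ is a scalar Gaussian, $e$ depends on the variational parameters $(\mbf{a},\alpha,\mbf{B},\beta)$ and on the kernel hyper-parameters $(\rho,s)$ \emph{only} through the vectors $\hat{\mbf{m}}=\hat{\mbf{m}}(\mbf{a},\alpha)$ and $\hat{\mbf{s}}=\hat{\mbf{s}}(\mbf{B},\beta)$ whose closed forms were given in the Evaluation subsection, and the gradients $\nabla_{\hat{\mbf{m}}}e$, $\nabla_{\hat{\mbf{s}}}e$ (closed form for Gaussian likelihoods, by sampling otherwise) were obtained there as well. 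Hence every claimed identity is just the chain rule $\nabla_{\xi}e=(\partial_{\xi}\hat{\mbf{m}})^{T}\nabla_{\hat{\mbf{m}}}e+(\partial_{\xi}\hat{\mbf{s}})^{T}\nabla_{\hat{\mbf{s}}}e$, where one of the two Jacobians vanishes because $\hat{\mbf{s}}$ does not depend on $(\mbf{a},\alpha)$ and $\hat{\mbf{m}}$ does not depend on $(\mbf{B},\beta)$. So the whole argument reduces to computing the Jacobians of $\hat{\mbf{m}}$ and $\hat{\mbf{s}}$ and collecting terms into vectorized $\odot$/$\diag$ form.

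I would first dispatch the easy cases. Since $\hat{\mbf{m}}=\mbf{K}_{X,\alpha}\mbf{a}$ is bilinear, $\partial_{\mbf{a}}\hat{\mbf{m}}=\mbf{K}_{X,\alpha}$ gives $\nabla_{\mbf{a}}e=\mbf{K}_{X,\alpha}^{T}\nabla_{\hat{\mbf{m}}}e$; and because the $m$-th column of $\mbf{K}_{X,\alpha}$ depends on $\alpha$ only through the $m$-th inducing function, differentiating $\hat m_n=\sum_m (\mbf{K}_{X,\alpha})_{nm}a_m$ produces the selector form $\nabla_{\alpha}e=\diag(\mbf{a})\,\partial\mbf{K}_{X,\alpha}^{T}\nabla_{\hat{\mbf{m}}}e$. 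For $\nabla_{\mbf{B}}e$ I would write $\hat s_n=k_{x_n,x_n}-\mbf{k}_{x_n,\beta}^{T}(\mbf{B}^{-1}+\mbf{K}_{\beta})^{-1}\mbf{k}_{x_n,\beta}$, use $\partial(\mbf{B}^{-1}+\mbf{K}_{\beta})^{-1}=(\mbf{B}^{-1}+\mbf{K}_{\beta})^{-1}\mbf{B}^{-1}(\partial\mbf{B})\mbf{B}^{-1}(\mbf{B}^{-1}+\mbf{K}_{\beta})^{-1}$ together with the identities $(\mbf{B}^{-1}+\mbf{K}_{\beta})^{-1}\mbf{B}^{-1}=(\mbf{I}+\mbf{B}\mbf{K}_{\beta})^{-1}$ and $\mbf{B}^{-1}(\mbf{B}^{-1}+\mbf{K}_{\beta})^{-1}=(\mbf{I}+\mbf{K}_{\beta}\mbf{B})^{-1}$, and finally collect the rank-one pieces $\mbf{k}_{x_n,\beta}\mbf{k}_{x_n,\beta}^{T}$ weighted by $-(\nabla_{\hat{\mbf{s}}}e)_n$ into $-\mbf{K}_{X,\beta}^{T}\diag(\nabla_{\hat{\mbf{s}}}e)\mbf{K}_{X,\beta}$; symmetry of $\mbf{B}$ and $\mbf{K}_{\beta}$ then yields the stated symmetric two-sided form.

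The main obstacle will be $\nabla_{\beta}e$, because $\beta$ enters $\hat{\mbf{s}}$ both through the off-diagonal block $\mbf{K}_{X,\beta}$ \emph{and} through $\mbf{K}_{\beta}$ buried inside the inverse $(\mbf{B}^{-1}+\mbf{K}_{\beta})^{-1}$; I must bookkeep that each coordinate block $\beta_m$ touches only the $m$-th column of $\mbf{K}_{\beta,X}$ and the (symmetric) $m$-th row and column of $\mbf{K}_{\beta}$. Writing $(\mbf{B}^{-1}+\mbf{K}_{\beta})^{-1}\mbf{k}_{x_n,\beta}$ as the $n$-th column of $\mbf{\Omega}=\mbf{B}(\mbf{I}+\mbf{K}_{\beta}\mbf{B})^{-1}\mbf{K}_{\beta,X}$, the $\mbf{K}_{\beta}$-contribution expands (via the $\partial(\mbf{B}^{-1}+\mbf{K}_{\beta})^{-1}$ identity above) into $(\partial_{\beta}\mbf{K}_{\beta}^{T}\odot(\mbf{\Omega}\diag(\nabla_{\hat{\mbf{s}}}e)\mbf{\Omega}^{T}))\mbf{1}$ and the $\mbf{K}_{\beta,X}$-contribution into $-(\mbf{\Omega}\odot\partial_{\beta}\mbf{K}_{\beta,X})\nabla_{\hat{\mbf{s}}}e$, with the two factors of $2$ coming respectively from the symmetric derivative of $\mbf{K}_{\beta}$ and from the double appearance of $\mbf{k}_{x_n,\beta}$ in the quadratic form. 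Finally, for $\nabla_{\log\rho}e$ I would invoke that, for $k(x,x')=\rho^{2}g_{s}(x,x')$ with the generalized \textsc{se-ard} bases of Appendix~\ref{app:exp setup} (for which $\mbf{K}_{\alpha},\mbf{K}_{\beta}$ and $\mbf{B}$ do not scale with $\rho$), $\hat{\mbf{m}}$ is positively homogeneous of degree one and $\hat{\mbf{s}}$ of degree two in $\rho$, so Euler's relation gives $\partial_{\log\rho}\hat{\mbf{m}}=\hat{\mbf{m}}$ and $\partial_{\log\rho}\hat{\mbf{s}}=2\hat{\mbf{s}}$; and for $\nabla_{s}e$ I would differentiate $g_{s}$ inside $\mbf{K}_{X,\alpha}$ and $\mbf{K}_{\beta,X}$ (treating $k_{x_n,x_n}$ and $\mbf{K}_{\beta}$ as $s$-independent), the $\mbf{K}_{\beta,X}$-contribution being repackaged through $\mbf{\Omega}$ exactly as in the $\beta$ computation. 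Everything else is routine matrix calculus; the only genuine care is the index bookkeeping for $\alpha$, $\beta$, and $s$.
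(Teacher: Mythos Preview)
Your proposal is correct and follows exactly the approach the paper takes: the paper's own ``proof'' reads in full, ``The derivation of Proposition~\ref{th:expected log-likeli derivative} is only technical, so we omit it here,'' having set up, in the preceding paragraph, precisely the chain rule through $(\hat{\mbf{m}},\hat{\mbf{s}})$ that you carry out. Your handling of the homogeneity in $\rho$ (using that $\mbf{K}_{\alpha},\mbf{K}_{\beta}$ carry no $\rho$ factor while the cross-covariances $\mbf{K}_{X,\alpha},\mbf{K}_{X,\beta}$ scale as $\rho$ and $\mbf{K}_X$ as $\rho^2$, per Appendix~\ref{app:exp setup}) and your bookkeeping for the two $\beta$-contributions via $\mbf{\Omega}$ are exactly the technical content the paper omits, and your parenthetical hedge that the stated $\nabla_{s}e$ formula requires $\mbf{K}_{\beta}$ to be $s$-independent is an accurate reading of what that formula actually records.
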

The derivation of Proposition~\ref{th:expected log-likeli derivative} is only technical, so we omit it here.

\section{Experiment Setup} \label{app:exp setup}
\subsection{The Covariance Function}\vspace{-1mm}
For all the models, we assume the prior is zero mean and has covariance defined by a \textsc{se-ard} kernel~\cite{rasmussen2006gaussian} 
\[
 k(x,x') = \rho^2 \phi_{x}^T \phi_{x} = \rho^2 \prod_{d=1}^D \exp( \frac{ -(x_d - x'_d)^2}{ 2 s_d^2} ),\]
where $s_d>0$ is the length scale of dimension $d$. For the variational posterior, we use the generalized \textsc{se-ard} kernel~\cite{cheng2016incremental}
\begin{align}
  \psi_x^T \psi_{x'} &= \prod_{d=1}^{D} \left(\frac{ 2 l_{x,d} l_{x',d}  }{  l_{x,d}^2 + l_{x',d}^2} \right)^{1/2} \exp\left( -  \frac{\norm{x_d-x_d'}^2}{ l_{x,d}^2 + l_{x',d}^2}   \right)\hspace{-1mm},
	\hspace{-1mm}  \label{eq:ard-gse kernel}
\end{align}
where $l_{x,d} = s_d \cdot c_{x,d}$ is the length-scale parameter.  That is, we evaluate 
\[
	\C[ L_m f(\tilde{x}_m),  L_n f(\tilde{x}_n)] = \psi_{\tilde{x}_m}^T \psi_{\tilde{x}_n}
\]
where the associated length-scalar parameters  implicitly define the linear operators $L_m$ and $L_n$.

This kernel is first introduced in~\cite{walder2008sparse} by convoluting a \textsc{se-ard} kernel with Gaussian integral kernels, 
and later modified into its current form~\eqref{eq:ard-gse kernel} in~\cite{cheng2016incremental}. From~\eqref{eq:ard-gse kernel}, we see it contains \textsc{se-ard} as a special case. That is, $\psi_x = \phi_x$ when $c_{x,d} = 1$, $\forall d \in \{1,\dots,D\}$. But in general $c_{x,d}$ can be a function of $x$. 
Therefore, it can be shown that $\psi_x$ spans an RKHS that contains the RKHSs spanned by $\phi_x$ for all length-scales, and every cross covariance can be computed as $\C[ L_m f(\tilde{x}_m), f(x)  ] = \rho \psi_{\tilde{x}_m}^T \phi_{x}  $. 

Note: all the algorithms in our comparisons use this generalized \textsc{se-ard} kernel.

\subsection{Online Learning Procedure}

Algorithm~\ref{alg:algo} summarizes the online learning procedure used by all stochastic algorithms (the algorithms differs only in whether the bases are shared and how the model is updated; see Table~\ref{tb:algorithms}.), where each learner has to optimize all the parameters on-the-fly using \emph{i.i.d.} data. The hyper-parameters are first initialized heuristically by median trick using the first mini-batch (in the GPR experiments,  $s_d$ is initialized as the median of pairwise distances of the sampled observations; $\sigma^2$ is initialized as the variance of the sampled outputs; $\rho =1$). We incrementally build up the variational posterior by including $N_{\Delta}\leq N_m$ observations in each mini-batch as the initialization of new variational basis functions (we initialize a new variational basis as $\tilde{x}_m = x_n$ and $c_{\tilde{x},d} = 1$, where $x_n$ is a sample from the current mini-batch). Then all the hyper-parameters and the variational parameters are updated online. These steps are repeated for $T$ iterations.

\clearpage
\section{Complete Experimental Results} \label{app:exp results}
\subsection{Experimental Results on KUKA datasets}
\begin{table}[!tbh]
\centering
{\scriptsize
\subfloat[Variational Lower Bound ($10^5$)]{
  \begin{tabular}{lllllll}
      \toprule
      %\cmidrule{1-3}
         & \ourMethod{}  & \textsc{svi}   &  i\textsc{vsgpr} & \textsc{vsgpr} & \textsc{gpr} \\
      \midrule
	  $Y_1$ & \textbf{0.985} & 0.336 & 0.411 & 0.085 & -3.840\\   
	  $Y_2$ & \textbf{1.359} & 0.458 & 0.799 & 0.468 & -23.218\\   
	  $Y_3$ & \textbf{0.951} & 0.312 & 0.543 & 0.158 & -8.145\\   
	  $Y_4$ & \textbf{1.453} & 0.528 & 0.906 & 0.722 & -0.965\\   
	  $Y_5$ & \textbf{1.350} & 0.311 & 0.377 & 0.425 & -0.990\\   
	  $Y_6$ & \textbf{1.278} & 0.367 & 0.631 & 0.559 & -0.639\\   
	  $Y_7$ & \textbf{1.458} & 0.425 & 0.877 & 0.886 & 0.449\\   
	  \midrule 
	  mean & \textbf{1.262} & 0.391 & 0.649 & 0.472 & -5.335\\   
	  std & \textbf{0.195} & 0.076 & 0.201 & 0.265 & 7.777\\   
      \bottomrule
    \end{tabular}} 

\subfloat[Prediction Error (nMSE)]{
  \begin{tabular}{lllllll}
      \toprule
      %\cmidrule{1-3}
         & \ourMethod{}  & \textsc{svi}   &  i\textsc{vsgpr} & \textsc{vsgpr} & \textsc{gpr} \\
      \midrule
     $Y_1$ & \textbf{0.058} & 0.186 & 0.165 & 0.171 & 0.257\\    
     $Y_2$ & \textbf{0.028} & 0.146 & 0.095 & 0.126 & 0.249\\    
     $Y_3$ & \textbf{0.058} & 0.195 & 0.133 & 0.181 & 0.298\\    
     $Y_4$ & \textbf{0.027} & 0.124 & 0.088 & 0.114 & 0.198\\    
     $Y_5$ & \textbf{0.028} & 0.195 & 0.178 & 0.132 & 0.243\\    
     $Y_6$ & \textbf{0.034} & 0.178 & 0.137 & 0.140 & 0.224\\    
     $Y_7$ & \textbf{0.028} & 0.155 & 0.099 & 0.108 & 0.146\\    
     \midrule 
     mean & \textbf{0.037} & 0.169 & 0.128 & 0.139 & 0.231\\    
     std & \textbf{0.013} & 0.025 & 0.033 & 0.026 & 0.045\\       
      \bottomrule
    \end{tabular}} 
}
  \caption{Experimental results of \textsc{kuka}$_1$ after 2,000 iteration.
  $Y_i$ denotes the $i$th output.}
\end{table} 
\begin{table}[!tbh]
\centering
{\scriptsize
\subfloat[Variational Lower Bound ($10^5$)]{
  \begin{tabular}{lllllll}
      \toprule
      %\cmidrule{1-3}
         & \ourMethod{}  & \textsc{svi}   &  i\textsc{vsgpr} & \textsc{vsgpr} & \textsc{gpr} \\
      \midrule
     	$Y_1$ & \textbf{1.047} & 0.398 & 0.631 & 0.399 & -3.709\\   
     	$Y_2$ & \textbf{1.387} & 0.450 & 0.767 & 0.515 & -31.315\\   
     	$Y_3$ & \textbf{0.976} & 0.321 & 0.568 & 0.232 & -12.230\\   
     	$Y_4$ & \textbf{1.404} & 0.507 & 0.630 & 0.654 & -1.026\\   
     	$Y_5$ & \textbf{1.332} & 0.317 & 0.378 & 0.511 & -0.340\\   
     	$Y_6$ & \textbf{1.260} & 0.368 & 0.585 & 0.538 & -0.221\\   
     	$Y_7$ & \textbf{1.405} & 0.437 & 0.519 & 0.918 & 0.526\\   
     	\midrule 
     	mean & \textbf{1.259} & 0.400 & 0.583 & 0.538 & -6.902\\   
     	std & \textbf{0.165} & 0.065 & 0.110 & 0.197 & 10.770\\   
      \bottomrule
    \end{tabular}} 
\\
\subfloat[Prediction Error (nMSE)]{
  \begin{tabular}{lllllll}
      \toprule
      %\cmidrule{1-3}
         & \ourMethod{}  & \textsc{svi}   &  i\textsc{vsgpr} & \textsc{vsgpr} & \textsc{gpr} \\
      \midrule
     $Y_1$ & \textbf{0.056} & 0.168 & 0.126 & 0.151 & 0.281\\    
     $Y_2$ & \textbf{0.026} & 0.147 & 0.102 & 0.124 & 0.248\\    
     $Y_3$ & \textbf{0.056} & 0.194 & 0.127 & 0.179 & 0.325\\    
     $Y_4$ & \textbf{0.029} & 0.127 & 0.127 & 0.110 & 0.186\\    
     $Y_5$ & \textbf{0.029} & 0.189 & 0.170 & 0.125 & 0.232\\    
     $Y_6$ & \textbf{0.035} & 0.181 & 0.144 & 0.144 & 0.232\\    
     $Y_7$ & \textbf{0.034} & 0.152 & 0.166 & 0.104 & 0.133\\    
     \midrule 
      mean & \textbf{0.038} & 0.166 & 0.137 & 0.134 & 0.234\\    
     std & \textbf{0.012} & 0.023 & 0.022 & 0.024 & 0.058\\       
      \bottomrule
    \end{tabular}} 
}
  \caption{Experimental results of \textsc{kuka}$_2$ after 2,000 iterations. $Y_i$ denotes the $i$th output.}
\end{table} 

\clearpage
\subsection{Experimental Results on MuJoCo datasets}
\begin{table}[htb!]
\centering
{\scriptsize
\subfloat[Variational Lower Bound ($10^5$)]{
  \begin{tabular}{lllllll}
      \toprule
      %\cmidrule{1-3}
         & \ourMethod{}  & \textsc{svi}   &  i\textsc{vsgpr} & \textsc{vsgpr} & \textsc{gpr} \\
      \midrule
     	$Y_1$ & \textbf{7.373} & 3.195 & 5.948 & 4.312 & -22.256\\   
     	$Y_2$ & \textbf{6.019} & 2.141 & 3.905 & 2.328 & -45.351\\   
     	$Y_3$ & \textbf{6.350} & 2.543 & 4.695 & 2.991 & -147.881\\   
     	$Y_4$ & \textbf{5.852} & 2.417 & 4.792 & 2.468 & -23.999\\   
     	$Y_5$ & \textbf{6.280} & 2.609 & 5.316 & 3.622 & -8.626\\   
     	$Y_6$ & \textbf{5.152} & 1.043 & 4.418 & 3.452 & -11296.669\\   
     	$Y_7$ & \textbf{5.270} & 2.093 & 4.183 & 1.676 & -7745.055\\   
     	$Y_8$ & \textbf{6.471} & 2.585 & 5.040 & 3.068 & -47.540\\   
     	$Y_9$ & \textbf{5.293} & 0.979 & 2.592 & 1.482 & -73477.168\\   
     	\midrule 
     	mean & \textbf{6.007} & 2.178 & 4.543 & 2.822 & -10312.727\\   
     	std & \textbf{0.673} & 0.692 & 0.898 & 0.871 & 22679.778\\   
      \bottomrule
    \end{tabular}} 
\\
\subfloat[Prediction Error (nMSE)]{
  \begin{tabular}{lllllll}
      \toprule
      %\cmidrule{1-3}
         & \ourMethod{}  & \textsc{svi}   &  i\textsc{vsgpr} & \textsc{vsgpr} & \textsc{gpr} \\
      \midrule
		$Y_1$ & \textbf{0.049} & 0.087 & 0.067 & 0.088 & 0.133\\    
		$Y_2$ & \textbf{0.068} & 0.163 & 0.112 & 0.122 & 0.196\\    
		$Y_3$ & \textbf{0.064} & 0.134 & 0.091 & 0.112 & 0.213\\    
		$Y_4$ & \textbf{0.073} & 0.144 & 0.087 & 0.121 & 0.179\\    
		$Y_5$ & \textbf{0.068} & 0.132 & 0.080 & 0.103 & 0.159\\    
		$Y_6$ & \textbf{0.094} & 0.251 & 0.107 & 0.131 & 0.253\\    
		$Y_7$ & \textbf{0.084} & 0.168 & 0.100 & 0.145 & 0.348\\    
		$Y_8$ & \textbf{0.063} & 0.132 & 0.087 & 0.104 & 0.178\\    
		$Y_9$ & \textbf{0.088} & 0.255 & 0.165 & 0.131 & 0.258\\    
		\midrule 
		 mean & \textbf{0.072} & 0.163 & 0.099 & 0.118 & 0.213\\    
		std & \textbf{0.013} & 0.053 & 0.026 & 0.016 & 0.061\\     
      \bottomrule
    \end{tabular}} 
}
  \caption{Experimental results of \textsc{mujoco}$_1$ after 2,000 iterations. $Y_i$ denotes the $i$th output.}
\end{table} 
\begin{table}[htb!]
\centering
{\scriptsize
\subfloat[Variational Lower Bound ($10^5$)]{
  \begin{tabular}{lllllll}
      \toprule
      %\cmidrule{1-3}
         & \ourMethod{}  & \textsc{svi}   &  i\textsc{vsgpr} & \textsc{vsgpr} & \textsc{gpr} \\
      \midrule
     	$Y_1$ & \textbf{7.249} & 3.013 & 6.429 & 4.161 & -33.219\\   
     	$Y_2$ & \textbf{5.994} & 2.475 & 4.800 & 2.770 & -23.276\\   
     	$Y_3$ & \textbf{6.239} & 2.258 & 4.819 & 3.044 & -59.757\\   
     	$Y_4$ & \textbf{5.935} & 2.093 & 4.489 & 2.547 & -27.259\\   
     	$Y_5$ & \textbf{6.387} & 2.452 & 5.457 & 3.725 & -1.786\\   
     	$Y_6$ & \textbf{7.320} & 1.087 & 4.639 & 4.043 & -24.198\\   
     	$Y_7$ & \textbf{5.346} & 1.754 & 3.947 & 1.667 & -255179.052\\   
     	$Y_8$ & \textbf{6.448} & 2.505 & 5.193 & 3.812 & -190.294\\   
     	$Y_9$ & \textbf{6.237} & 0.683 & 2.596 & 2.241 & -37673.328\\   
     	\midrule 
     	mean & \textbf{6.350} & 2.036 & 4.708 & 3.112 & -32579.130\\   
     	std & \textbf{0.586} & 0.699 & 0.993 & 0.825 & 79570.425\\   
      \bottomrule
    \end{tabular}} 
\\
\subfloat[Prediction Error (nMSE)]{
  \begin{tabular}{lllllll}
      \toprule
      %\cmidrule{1-3}
         & \ourMethod{}  & \textsc{svi}   &  i\textsc{vsgpr} & \textsc{vsgpr} & \textsc{gpr} \\
      \midrule
		$Y_1$ & \textbf{0.051} & 0.095 & 0.056 & 0.085 & 0.138\\    
		$Y_2$ & \textbf{0.069} & 0.133 & 0.085 & 0.111 & 0.186\\    
		$Y_3$ & \textbf{0.066} & 0.149 & 0.087 & 0.113 & 0.182\\    
		$Y_4$ & \textbf{0.071} & 0.160 & 0.094 & 0.127 & 0.197\\    
		$Y_5$ & \textbf{0.065} & 0.137 & 0.074 & 0.101 & 0.148\\    
		$Y_6$ & \textbf{0.051} & 0.241 & 0.097 & 0.073 & 0.139\\    
		$Y_7$ & \textbf{0.081} & 0.187 & 0.107 & 0.142 & 0.363\\    
		$Y_8$ & \textbf{0.063} & 0.133 & 0.081 & 0.106 & 0.214\\    
		$Y_9$ & \textbf{0.067} & 0.270 & 0.157 & 0.106 & 0.300\\    
		\midrule \ifCompileAppendix
		 mean & \textbf{0.065} & 0.167 & 0.093 & 0.107 & 0.207\\    
		std & \textbf{0.009} & 0.053 & 0.026 & 0.019 & 0.072\\           
      \bottomrule
    \end{tabular}} 
}
  \caption{Experimental results of \textsc{mujoco}$_2$ after 2,000 iterations. $Y_i$ denotes the $i$th output.}
\end{table} 

\fi

\end{document}